\documentclass[]{onecat}

\usepackage[utf8]{inputenc}
\usepackage{caption}
\usepackage{hyperref}
\usepackage{cleveref}
\usepackage{verbatim}
\usepackage{amsmath}
\usepackage{amssymb}
\usepackage{amsfonts}
\usepackage{amsthm}
\usepackage{mathtools}
\usepackage{graphicx}
\usepackage{floatrow}
\usepackage{subcaption}
\usepackage{listings}
\usepackage{algorithm}
\usepackage{makecell}
\usepackage{multirow}
\usepackage{tabularx}
\usepackage{array}
\usepackage{textcomp}
\usepackage{enumitem}
\usepackage{wrapfig}
\usepackage{tikz}
\usepackage[toc,page,header]{appendix}
\usepackage{pifont}
\usepackage{minitoc}
\usepackage{marvosym}

\usetikzlibrary{arrows.meta,positioning,fit,calc}

\definecolor{PromptFrame}{HTML}{C99700}
\definecolor{PromptTitle}{HTML}{D9A600}
\definecolor{PromptBack}{HTML}{FFF4BF}
\definecolor{SpecFrame}{HTML}{4F7CAC}
\definecolor{SpecTitle}{HTML}{D9EAF7}
\definecolor{SpecBack}{HTML}{F5FAFF}
\definecolor{SchemaFrame}{HTML}{5F6B7A}
\definecolor{SchemaBack}{HTML}{F7F8FA}
\definecolor{CaseFrame}{HTML}{6B7280}
\definecolor{CaseBack}{HTML}{F8FAFC}
\definecolor{CaseTitle}{HTML}{E5E7EB}
\definecolor{GoodBack}{HTML}{ECFDF5}
\definecolor{BadBack}{HTML}{FEF2F2}
\definecolor{NeutralBack}{HTML}{F9FAFB}

\newcommand{\modelname}{\textsc{TraceLift}}
\newcommand{\method}{\modelname}
\newcommand{\datasetname}{\textsc{TraceLift-Groups}}

\newcommand{\best}[1]{{\bfseries #1}}
\newcommand{\dflab}{D\textsuperscript{4} Lab}

\newcommand{\Rexec}{R_{\mathrm{exec}}}
\newcommand{\urm}{u_{\mathrm{exec}}}
\newcommand{\srm}{s_{\mathrm{rm}}}
\newcommand{\clip}{\operatorname{clip}}

\newcommand{\Aout}{A}
\newcommand{\Dist}{\mathcal{D}}
\newcommand{\Expect}{\mathbb{E}}
\newcommand{\Prob}{\mathbb{P}}
\newcommand{\Var}{\operatorname{Var}}

\newcommand{\qtr}{q_{\mathrm{tr}}}
\newcommand{\qev}{q_{\mathrm{eval}}}
\newcommand{\utr}{u_{\mathrm{tr}}}
\newcommand{\mrm}{m_{\phi}}
\newcommand{\rempty}{\varnothing}
\newcommand{\badtag}[1]{\colorbox{BadBack}{\strut\textsf{#1}}}

\newcolumntype{Y}{>{\raggedright\arraybackslash}X}
\theoremstyle{definition}

\newtheorem{appassumption}{Assumption}
\newtheorem{appdefinition}{Definition}
\newtheorem{applemma}{Lemma}
\newtheorem{appproposition}{Proposition}
\newtheorem{appcorollary}{Corollary}
\newtheorem{appremark}{Remark}

\lstdefinestyle{promptstyle}{
  basicstyle=\ttfamily\scriptsize,
  breaklines=true,
  breakatwhitespace=false,
  columns=fullflexible,
  keepspaces=true,
  showstringspaces=false,
  tabsize=2,
  upquote=true
}

\lstdefinelanguage{json}{
  basicstyle=\ttfamily\small,
  breaklines=true,
  morecomment=[l]{//},
  morestring=[b]",
  stringstyle=\color{red!70!black},
  commentstyle=\color{gray!70!white},
  literate=
    *{0}{{{\color{gray}{0}}}}1
     {1}{{{\color{gray}{1}}}}1
     {:}{{{\color{black}{:}}}}1
     {,}{{{\color{black}{,}}}}1
     {\{}{{{\color{blue}{\{}}}}1
     {\}}{{{\color{blue}{\}}}}}1
     {[}{{{\color{blue}{[}}}}1
     {]}{{{\color{blue}{]}}}}1
}

\tcbuselibrary{listingsutf8}

\newtcolorbox{casebox}[2][]{%
  enhanced,
  breakable,
  sharp corners,
  colback=CaseBack,
  colframe=CaseFrame,
  colbacktitle=CaseTitle,
  coltitle=black,
  fonttitle=\bfseries,
  title={#2},
  left=1.2mm,
  right=1.2mm,
  top=1mm,
  bottom=1mm,
  #1
}

\newtcblisting{promptlisting}[2][]{%
  enhanced,
  breakable,
  sharp corners,
  colback=PromptBack,
  colframe=PromptFrame,
  colbacktitle=PromptTitle,
  coltitle=black,
  fonttitle=\bfseries,
  title={#2},
  listing only,
  listing options={style=promptstyle},
  left=1mm,
  right=1mm,
  top=1mm,
  bottom=1mm,
  #1
}

\newtcolorbox{specbox}[2][]{%
  enhanced,
  breakable,
  sharp corners,
  colback=SpecBack,
  colframe=SpecFrame,
  colbacktitle=SpecTitle,
  coltitle=black,
  fonttitle=\bfseries,
  title={#2},
  left=1.2mm,
  right=1.2mm,
  top=1mm,
  bottom=1mm,
  #1
}

\newtcblisting{schemalisting}[2][]{%
  enhanced,
  breakable,
  sharp corners,
  colback=SchemaBack,
  colframe=SchemaFrame,
  colbacktitle=SchemaFrame!20,
  coltitle=black,
  fonttitle=\bfseries,
  title={#2},
  listing only,
  listing options={style=promptstyle},
  left=1mm,
  right=1mm,
  top=1mm,
  bottom=1mm,
  #1
}

\title{\raisebox{-0.2cm}{\includegraphics[width=1.05cm]{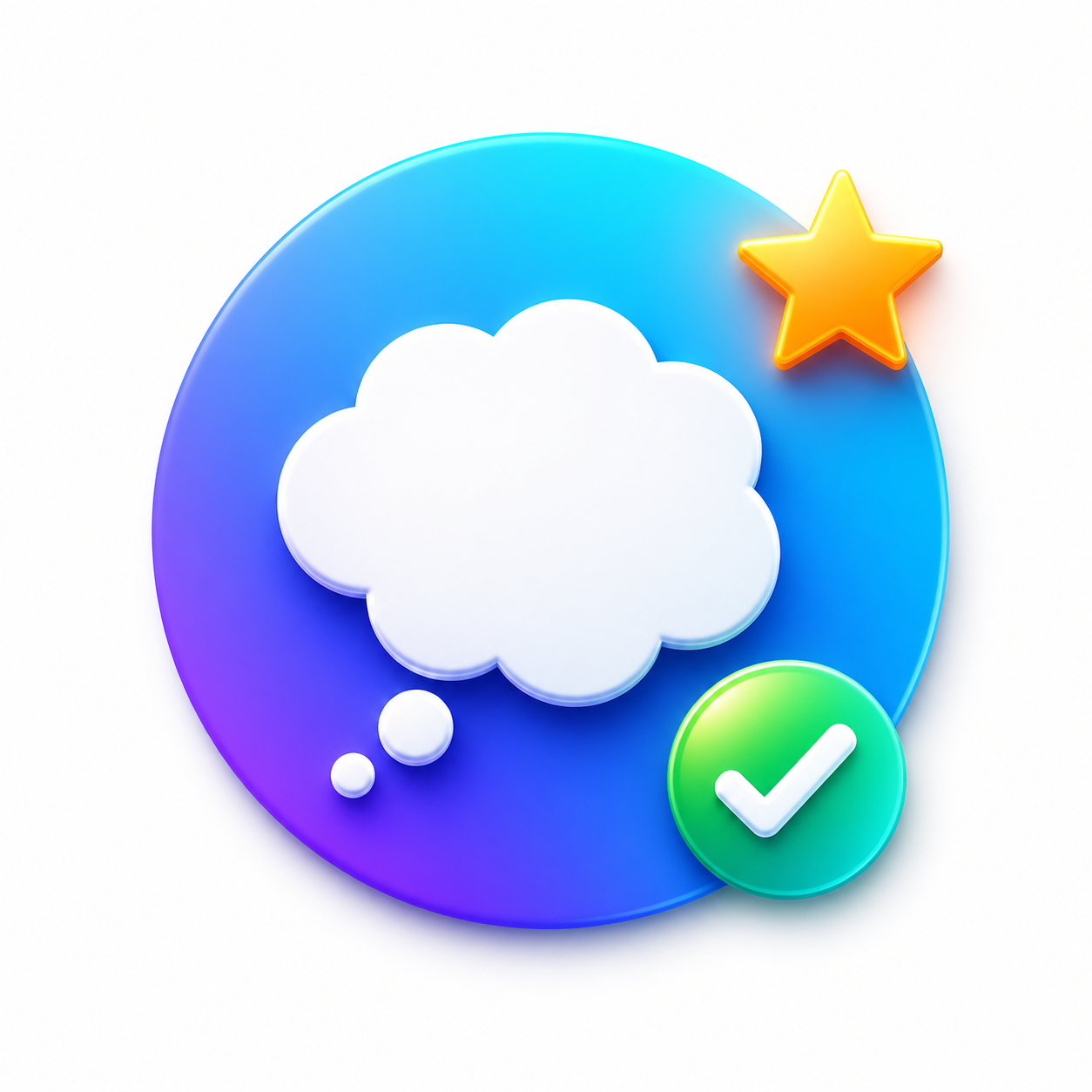}} Correct Is Not Enough: Training Reasoning Planners with Executor-Grounded Rewards}

\author{%
\parbox{\textwidth}{\centering
Tianyang Han$^{1*}$, Hengyu Shi$^{2*}$, Junjie Hu$^{2*}$ \\ [0.5em]
Xu Yang$^{1}$, Zhiling Wang$^{2\ddagger}$, Junhao Su$^{2\ddagger\dagger}$\\
}
}

\affiliation{%
\parbox{\textwidth}{\centering\small
$^1$\dflab, \quad $^2$Independent Researcher
}}

\contribution[*]{Equal contribution}
\contribution[\ddagger]{Corresponding author}
\contribution[\dagger]{Project leader}

\abstract{
Reinforcement learning with verifiable rewards has become a common way to improve explicit reasoning in large language models, but final-answer correctness alone does not reveal whether the reasoning trace is faithful, reliable, or useful to the model that consumes it. This outcome-only signal can reinforce traces that are right for the wrong reasons, overstate reasoning gains by rewarding shortcuts, and propagate flawed intermediate states in multi-step systems.
To this end, we propose \modelname, a planner-executor training framework that treats reasoning as a consumable intermediate artifact. During planner training, the planner emits tagged reasoning. A frozen executor turns this reasoning into the final artifact for verifier feedback, while an executor-grounded reward shapes the intermediate trace. This reward multiplies a rubric-based Reasoning Reward Model (RM) score by measured uplift on the same frozen executor, crediting traces that are both high-quality and useful. To make reasoning quality directly learnable, we introduce \datasetname{}, a rubric-annotated reason-only dataset built from math and code seed problems. Each example is a same-problem group containing a high-quality reference trace and multiple plausible flawed traces with localized perturbations that reduce reasoning quality or solution support while preserving task relevance.
Extensive experiments on code and math benchmarks show that this executor-grounded reasoning reward improves the two-stage planner-executor system over execution-only training, suggesting that reasoning supervision should evaluate not only whether a trace looks good, but also whether it helps the model that consumes it.
Our code is available at: \url{https://github.com/MasaiahHan/TraceLift}
}
\date{\today}

\begin{document}

\maketitle

\section{Introduction}
\label{sec:introduction}

Large language models increasingly use explicit reasoning trajectories especially during post-training with verifiable feedback. Chain-of-thought, bootstrapped rationales, program-aided reasoning, and recent reinforcement learning methods encourage models to produce intermediate text before final answers or programs~\citep{wei2022chain,kojima2022large,wang2022self,yao2023tree,zelikman2022star,gsmhard,chen2023program,shao2024deepseekmath,deepseek2025r1,Han2025BeyondWA}. These trajectories are not merely explain the plan, constraints, intermediate derivations, and implementation choices but help to shape the final artifact. Reinforcement learning with verifiable feedback provides a natural way to improve reasoning-based systems.~\citep{shao2024deepseekmath,deepseek2025r1,le2022coderl,liu2023rltf,shojaee2023execution, su2025failure,pi2024strengthening}

As reasoning trajectories become part of the computation, reward signals must evaluate not only final correctness but also the quality and downstream effect of the reasoning process itself. Existing reward signals indicate whether a final answer or program succeeds, but they can hide flawed intermediate reasoning behind correct artifacts. This creates a credit-assignment failure in verifiable reinforcement learning: traces that are right for the wrong reasons, rely on shortcuts, or mask early mistakes with later fluent text can still be reinforced as successful behavior. Process supervision and automated process verifiers provide more direct reasoning feedback, yet language-model-based judges can inherit reasoning errors, calibration failures, or hallucinated assessments~\citep{uesato2022solving,gsmhard,wang2024math,setlur2024rewarding,han2024instinctive,pi2024mllm}. These limitations leave underexplored how to train reasoning trajectories that are both internally reliable and useful to the model that consumes them.

Owing to the modular role that reasoning trajectories already play in reasoning-based systems, they can be viewed as an interface between a trainable planner and a fixed executor. Under this view, the planner should not be rewarded merely for producing coherent-looking text. It should be rewarded for producing an intermediate trajectory that is reliable as a reasoning process and useful to the executor that consumes it. This perspective changes both the training and evaluation targets. Training should assign credit to trajectories according to their intrinsic quality and their measured effect on the executor. Evaluation should hold the executor fixed so that improvements reflect better executor-consumable reasoning rather than a stronger final-answer generator.

To address these limitations, we propose \modelname{} as a planner-executor framework that treats reasoning as a consumable intermediate artifact in a controlled planner-executor setting. We introduce a trainable reasoning planner policy that produces an intermediate trajectory consumed by a fixed executor rather than a final artifact for evaluation. The frozen executor uses the problem and trajectory to generate the final artifact, which is then scored by a verifier using task-specific correctness checks. This separation grounds reasoning supervision in the executor that will actually use the trajectory. During planner training, a rubric-based Reasoning Reward Model (RM) scores the trajectory itself and this score is weighted by measured executor uplift over a no-reasoning baseline. The resulting reward gives credit to reasoning only when it is both high quality and useful to the fixed executor. At evaluation time, the Reason RM and measured-uplift computation are removed. Improvements therefore reflect whether the trained planner better guides the same frozen executor under the same two-stage evaluation chain.

Specifically, to overcome the data bottleneck in learning reason-only quality signals, we introduce \datasetname{} as a rubric-annotated reason-only dataset. \datasetname{} contains 6,000 reasoning groups sampled from GSM8K train and OpenCodeReasoning. Each group anchors a high-quality reference trajectory and multiple plausible flawed trajectories to the same problem. These flawed trajectories are produced by targeted perturbations that weaken reasoning quality or solution support while preserving task relevance. An agentic rubric annotation pipeline then assigns multi-dimensional reasoning-quality scores. This grouped reason-only format trains the Reason RM to score a problem-reasoning pair by trajectory quality rather than final-solution correctness.

Our contributions are as follows.
\begin{itemize}
  \item We introduce \modelname{}, an executor-grounded training framework for reasoning planners. It rewards trajectories using both rubric-based reasoning quality and measured uplift on the same frozen executor, making the training signal aligned with the model that will consume the trajectory at test time.
  \item We introduce \datasetname{}, a rubric-annotated reason-only dataset with 6,000 reasoning groups sampled from GSM8K train and OpenCodeReasoning. Each group pairs a reference trajectory with targeted flawed trajectories and multi-dimensional rubric annotations for reasoning-trajectory quality rather than final-artifact correctness.
  \item Extensive experiments on code and math benchmarks demonstrate that \modelname{} consistently improves fixed two-stage planner-executor systems over execution-only reinforcement learning under the same evaluation chain, highlighting the advantage of rewarding reasoning by both trajectory quality and downstream executor utility.
\end{itemize}

\section{Related Work}
\label{sec:related}

\paragraph{ Large Language Models for Reasoning}
Chain-of-thought prompting shows that explicitly decoding intermediate reasoning can improve multi-step problem solving~\citep{wei2022chain,kojima2022large,wang2022self}. Search and bootstrapping methods further scale reasoning by sampling, refining, or selecting intermediate processes~\citep{yao2023tree,zelikman2022star}. Agentic methods such as ReAct and Reflexion use language-based reasoning to choose subsequent actions or incorporate feedback~\citep{yao2023react,shinn2023reflexion}. Program-aided methods connect language reasoning with external computation~\citep{gsmhard,chen2023program}. Recent reasoning-oriented models such as OpenAI o1, Qwen, and DeepSeek-R1 further show that explicit reasoning has become central to strong math and code performance~\citep{openai2024o1,yang2024qwen2,yang2025qwen3,deepseek2025r1}. These works establish reasoning as more than post-hoc explanation. Our work asks a complementary training question. We study how to reward a reasoning planner so that its intermediate reasoning is not only plausible but also useful to a separate frozen executor under a fixed evaluation protocol.

\paragraph{Agentic reinforcement learning and trajectory-level rewards.}
Agentic language-model systems increasingly treat problem solving as a trajectory of reasoning, actions, and feedback. ReAct and Reflexion show that language-based reasoning can guide actions and incorporate feedback~\citep{yao2023react,shinn2023reflexion,su2026manpp}. RLHF and preference-optimization methods train policies from learned preferences~\citep{christiano2017deep,ziegler2019fine,ouyang2022training,rafailov2024direct,su2024momentum}. Verifiable reinforcement learning and execution-based code methods further use answer correctness, compiler signals, or unit tests as rewards~\citep{shao2024deepseekmath,deepseek2025r1,le2022coderl,liu2023rltf,shojaee2023execution,li2022competition}. Process supervision and reward modeling move beyond final outcomes by assigning credit to intermediate steps or complete solution trajectories~\citep{gsm8k,uesato2022solving,math500,wang2024math,setlur2024rewarding}. These works motivate trajectory-level evaluation rather than outcome-only scoring. However, they do not directly evaluate whether the reasoning trajectory is accurate enough to help the downstream executor produce a better artifact. \modelname{} instead trains a reasoning planner by combining reason-only quality scoring with measured utility for the frozen executor that consumes the trace.

\section{Method}
\label{sec:method}

\modelname{} trains reasoning traces as executor-consumable intermediate artifacts in a fixed planner-executor protocol. In this section, we first describe \datasetname{}, the grouped reason-only supervision data used to learn trace-quality judgments. We then introduce \modelname{} training framework, which uses scores problem-reasoning pairs to train the offline Reasoning Reward Model (RM). Finally, we describe executor-grounded planner optimization, where verifier feedback is combined with an uplift-weighted Reasoning RM score to train the planner with GRPO. %At evaluation time, both the Reasoning RM and uplift estimator are removed, so improvements reflect whether the trained planner better guides the same frozen executor.

\begin{figure*}[t]
\centering
\includegraphics[width=\textwidth]{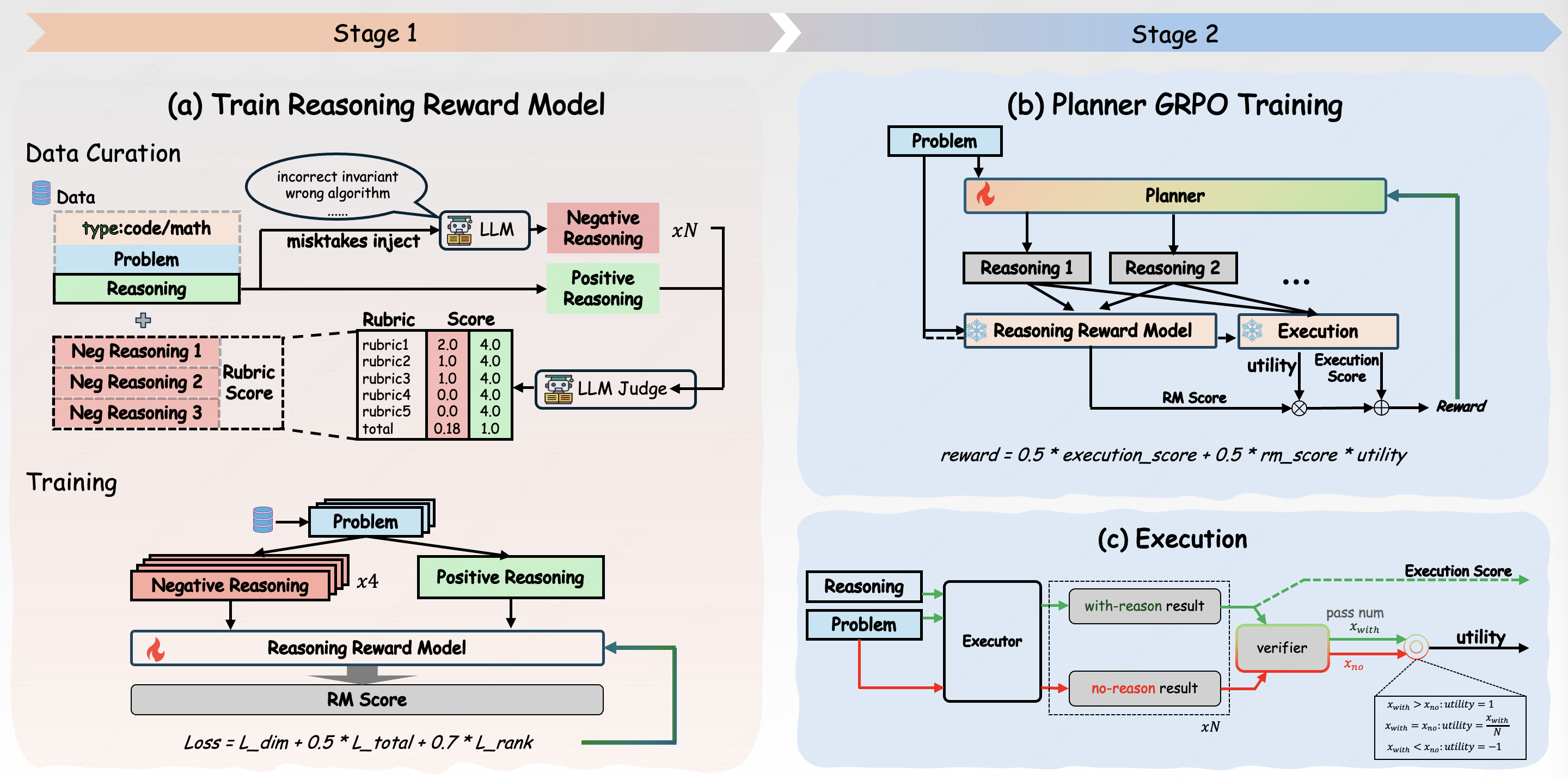}
\caption{The overall framework of \datasetname{} and \modelname{}. \textbf{(a)} Data curation pipeline of \datasetname{}. Then we use \datasetname{} to finetune the reward model specialized for reasoning supervising by the designed loss. \textbf{(b)} GRPO training process of the planner using previous trained reasoning reward model. \textbf{(c)} Details of execution calculation process. The Reasoning RM score is weighted by measured executor uplift before being combined with verifier feedback for planner optimization.}
\label{fig:method-pipeline}
\end{figure*}

% \subsection{Preliminary}

% Planner optimization follows GRPO-style reinforcement learning~\citep{shao2024deepseekmath,deepseek2025r1}. For each problem $P$, the old planner $\pi_{\theta_{\mathrm{old}}}$ samples a group of $G$ reasoning trajectories $\{R_i\}_{i=1}^{G}$. Each trajectory $R_i=(y_{i,1},\ldots,y_{i,T_i})$ receives a scalar reward $r_i=\mathcal{R}(P,R_i)$. GRPO avoids a learned value critic by estimating the baseline from rewards within the same problem group. The group-relative advantage is
% \begin{align}
% \hat{A}_i
% =
% \frac{r_i-\mathrm{mean}_{j\le G}(r_j)}
% {\mathrm{std}_{j\le G}(r_j)+\delta}.
% \label{eq:grpo-adv}
% \end{align}
% The same trajectory-level advantage is assigned to all tokens in $R_i$.

% The planner is updated with the clipped surrogate
% \begin{align}
% \mathcal{J}_{\mathrm{GRPO}}(\theta)
% =
% \mathbb{E}
% \left[
% \frac{1}{G}\sum_{i=1}^{G}
% \frac{1}{T_i}\sum_{t=1}^{T_i}
% \min\left(
% \rho_{i,t}(\theta)\hat{A}_i,
% \clip(\rho_{i,t}(\theta),1-\epsilon,1+\epsilon)\hat{A}_i
% \right)
% -\beta D_{\mathrm{KL}}(\pi_{\theta}\,\|\,\pi_{\mathrm{ref}})
% \right],
% \label{eq:grpo-objective}
% \end{align}
% where
% \begin{align}
% \rho_{i,t}(\theta)
% =
% \frac{
% \pi_{\theta}(y_{i,t}\mid P,y_{i,<t})
% }{
% \pi_{\theta_{\mathrm{old}}}(y_{i,t}\mid P,y_{i,<t})
% }.
% \end{align}
% Here $\epsilon$ is the clipping range, $\delta$ stabilizes group normalization, and the KL term is an optional reference-policy regularizer. In \modelname{}, the reward $\mathcal{R}(P,R)$ is the executor-grounded reasoning reward defined in Section~\ref{sec:tracelift-training}.

\subsection{\datasetname{}}

\datasetname{} provides reason-only supervision for evaluating reasoning trajectories without using final-artifact correctness as the label. Existing outcome or process labels usually do not isolate whether an intermediate trajectory is reliable and useful to the executor that consumes it. We therefore introduce \datasetname{}, a rubric-annotated reason-only dataset. Each example is a reasoning group anchored to one problem, containing a high-quality reference trajectory and multiple plausible flawed trajectories. The dataset is constructed in two stages. We first synthesize targeted trajectory perturbations, and then use an agentic rubric annotation pipeline to assign multi-dimensional reasoning-quality scores.

\subsubsection{Trajectory Perturbation}

We construct \datasetname{} from 3,000 seed problems sampled from OpenCodeReasoning and 3,000 seed problems sampled from the GSM8K training set. Each seed provides a problem, a reference reasoning trajectory, and a reference result. The reference trajectory is normalized into a reason-only trajectory that preserves the intermediate reasoning. 
% while removing final-answer or complete-code leakage. 
% For code, the trajectory should describe the algorithmic route, data structures, invariants, edge cases, and complexity without providing a pasteable implementation. For math, it should preserve the solution strategy and key intermediate derivations without directly revealing the final answer.

As shown in Figure~\ref{fig:method-pipeline}, we generate multiple plausible flawed trajectories through targeted error injection from each reference trajectory. The perturbation is local enough to keep the trajectory task-relevant, but strong enough to reduce reasoning reliability or solution support. Code perturbations include wrong algorithm choices, missing edge cases, off-by-one reasoning, incorrect invariants, infeasible complexity, vague pseudo-solutions, and irrelevant explanations. Math perturbations include arithmetic slips, wrong operations, dropped cases, unit mismatches, unsupported jumps, premature answers, and content-free reasoning. Across the 6,000 reasoning groups, these flawed trajectories cover seven perturbation types per domain. This design turns each problem into a controlled comparison over reasoning quality rather than a binary outcome-label example.

\subsubsection{Agentic Rubric Annotation}

We annotate each trajectory with an Large-Language-Model-based rubric judge that evaluates the reasoning process rather than the final artifact. The judge receives the problem, the candidate's trajectory, and task-specific rubric instructions. It returns dimension scores, a total score, and a short rationale for the rating. For code, the rubric scores task understanding, plan quality, step coherence, action support, and non-leakage. For math, it scores problem understanding, solution strategy, step coherence, calculation correctness, and answer support. Each dimension is mapped to a five-class label, and the total score is normalized to $[0,1]$.

The annotation pipeline is designed to prevent two shortcuts in Reasoning RM training. A trajectory should not receive high reward merely because a later artifact happens to pass, and it should not receive high reward merely because it is fluent or verbose. By placing reference and flawed trajectories under the same problem context, \datasetname{} provides both pointwise rubric supervision and within-group ranking supervision for learning trajectory quality.

\subsection{\modelname{} Training Framework}
\label{sec:tracelift-training}

As shown in Figure~\ref{fig:method-pipeline}, \modelname{} has two training stages. The first stage trains a Reasoning RM to score problem-reasoning pairs from rubric-labeled reasoning groups. The second stage uses this score together with verifier feedback and measured executor uplift to train the planner with GRPO. The Reasoning RM and measured uplift are training-time reward components, while the planner remains the only trainable generator in the planner-executor chain.

\subsubsection{Offline Reasoning RM Training}

We first train a Reasoning RM offline on \datasetname{} before planner optimization. Given a problem $P$ and a reasoning trajectory $R$, the Reasoning RM returns a scalar quality score.
\begin{equation}
\mathrm{RM}_{\phi}(P,R)\rightarrow \srm,\qquad \srm\in[0,1].
\label{eq:rm-score}
\end{equation}
The RM input contains the problem context and the trajectory. Here $P$ denotes the static task context available before artifact generation. It may include the problem statement and any public constraints or tests supplied in the prompt. It excludes executor outputs and verifier outcomes as well as final answers or code. This restriction makes the offline training target rubric-labeled reasoning quality rather than outcome correctness.

The Reasoning RM is applied pointwise but trained with group supervision. For each update, we sample one reference trajectory and four flawed trajectories from the same reasoning group. Each trajectory is encoded independently as a problem-reasoning input, while the group structure is used only by the ranking loss. The model uses five rubric heads to predict discrete dimension labels and an aggregate head to predict the normalized total score. For dimension \(k\), the annotated rubric score \(\tilde{y}^{\star}_{k}\) is rounded and clipped into a five-class label:
\begin{equation}
y^{\star}_{k}
=
\clip\!\left(
\left\lfloor \tilde{y}^{\star}_{k}+\frac{1}{2}\right\rfloor,
0,4
\right),
\qquad
\mathcal{L}_{\mathrm{dim}}
=
-\frac{1}{5}\sum_{k=1}^{5}\log p_{k,y^{\star}_{k}} .
\label{eq:rm-dim-loss}
\end{equation}
Let \(p_{k,c}\) be the predicted probability that rubric dimension \(k\) receives label \(c\in\{0,\ldots,4\}\). We compute the rubric-head-derived score as
\begin{equation}
\srm^{\mathrm{dim}}
=
\sum_{k=1}^{5} w_k
\sum_{c=0}^{4} p_{k,c}\frac{c}{4},
\qquad
\sum_{k=1}^{5}w_k=1.
\label{eq:rm-dim-score}
\end{equation}
The aggregate head outputs \(\srm^{\mathrm{total}}=\sigma(z_{\mathrm{total}})\), and is trained with a Huber loss against the clipped normalized total score:
\begin{equation}
\mathcal{L}_{\mathrm{total}}
=
\mathrm{Huber}_{\delta=1}
\!\left(
\srm^{\mathrm{total}}
-
\clip(\tilde{y}_{\mathrm{total}}^{\star},0,1)
\right).
\label{eq:rm-total-loss}
\end{equation}
The online RM score combines \(\srm^{\mathrm{dim}}\) with the aggregate score:
\begin{equation}
\srm
=
0.5\,\srm^{\mathrm{dim}}
+ 0.5\,\srm^{\mathrm{total}}.
\label{eq:rm-score-aggregation}
\end{equation}
We optimize
\begin{equation}
\mathcal{L}_{\mathrm{RM}}
=
\mathcal{L}_{\mathrm{dim}}
+ \alpha \mathcal{L}_{\mathrm{total}}
+ \beta \mathcal{L}_{\mathrm{rank}},
\qquad
\mathcal{L}_{\mathrm{rank}}
=
-\log \sigma(\srm^{+}-\srm^{-}).
\label{eq:rm-loss}
\end{equation}
The ranking pairs are drawn from the same reasoning group. In the current implementation, we set \(\alpha=0.5\) and \(\beta=0.7\).

\subsubsection{Executor-Grounded Planner Optimization}

We then train the planner to produce trajectories that are both rubric-good and useful to the frozen executor. For each problem, the planner samples reasoning trajectories using the GRPO objective from the preliminary. For each sampled trajectory $R_i$, the frozen executor produces a final artifact $\Aout_i$, and the verifier scores that artifact.
\begin{align}
\Aout_i = E(P,R_i), \nonumber\\
\Rexec(P,R_i) = V(\Aout_i),
\label{eq:executor-reward}
\end{align}
% where \(R_i \sim \pi_{\theta}(\cdot\mid P)\).
where \(R_i \sim \pi_\theta(\cdot\mid P)\), and \(\Rexec(P,R_i)\) is the executor reward assigned to the sampled trajectory, and $V$ denotes the task verifier. To ground the RM score in executor utility, \modelname{} also compares the same frozen executor with and without the planner trajectory. For uplift measurement, $V\in\{0,1\}$ denotes binary verifier success under matched decoding settings. We estimate the success rate with trajectory $R$ as $\hat{p}(P,R)$ and the no-reasoning baseline success rate as $\hat{p}_{0}(P)$.
\begin{align}
\hat{p}(P,R)
&=
\frac{1}{K}\sum_{k=1}^{K}V(E_k(P,R)), \nonumber\\
\hat{p}_{0}(P)
&=
\frac{1}{K}\sum_{k=1}^{K}V(E_k(P,\varnothing)), \nonumber\\
\urm(P,R)
&=
\clip\big(\hat{p}(P,R)-\hat{p}_{0}(P),\ -1,\ 1\big).
\label{eq:uplift}
\end{align}
where the trajectory $R$ is fixed over repeated executor samples during estimation. No-reasoning baseline uses the same executor prompt with the reasoning field omitted. Positive uplift means that $R$ improves verifier success for this executor, while negative uplift means that it hurts the executor.

The scalar reward therefore combines verifier feedback, Reasoning RM quality and executor uplift:
\begin{equation}
\mathcal{R}(P,R)
=
0.5\,\Rexec(P,R)
+ 0.5\,\mathrm{RM}_{\phi}(P,R)\,\urm(P,R).
\label{eq:reward}
\end{equation}
The verifier term anchors planner training to task success. The uplift-weighted RM term increases the reward when a high-rubric trajectory also improves the frozen executor. This combined scalar reward is assigned at the completion level and used in the GRPO objective. At test time, the Reasoning RM and measured uplift are removed, and Section~\ref{sec:experiments} describes the resulting evaluation protocol.

\section{Experiments}
\label{sec:experiments}

% We evaluate whether executor-grounded reasoning reward improves a fixed planner-executor system over execution-only RL. The experiments are designed to compare training objectives under the same evaluation chain, not to claim direct state-of-the-art performance for a single model.

\subsection{Experimental setup}

\paragraph{Models and training objectives.}
We train \modelname{} on Qwen2.5-7B, Llama3.1-8B, and Qwen3-4B model families~\citep{yang2024qwen2,grattafiori2024llama3,yang2025qwen3}.
% The baselines are: Base, the untrained planner in the same two-stage pipeline;  
For the main policy Group Relative Policy Optimization (GRPO) experiments, all Qwen2.5-7B, Llama3.1-8B, and Qwen3-4B policies are trained for 600 steps in bf16 with learning rate $5\times10^{-6}$, and we use a temperature of 0.5. 
% \modelname{} combines the policy's end-to-end reward with an uplift-weighted Reason Reward Model(RM) score using equal weights. For completed code runs, we additionally report a rollout-count sweep that varies the number of executor comparisons used to estimate uplift.

% \paragraph{Benchmarks.}
% For code, we evaluate HumanEval~\citep{chen2021evaluating}, HumanEvalPlus from EvalPlus~\citep{liu2024evalplus}, MBPP-full~\citep{austin2021program}, and LiveCodeBench public tests~\citep{jain2024livecodebench}. The MBPP-full setting uses the 500-example full/test split and is not the EvalPlus MBPP/MBPP+ leaderboard subset. The LiveCodeBench results use local public tests. For math, we evaluate GSM8K, GSM-Hard, SVAMP, and MATH500. We report aggregate scores as micro-averages over the listed benchmark instances.
\paragraph{Evaluation metrics.}
For code, we evaluate \modelname{} on HumanEval~\citep{chen2021evaluating}, HumanEval+ ~\citep{liu2024evalplus}, MBPP-full~\citep{austin2021program} and LiveCodeBench~\citep{jain2024livecodebench}. 
% The MBPP-full setting uses the 500-example full/test split. The LiveCodeBench results use local public tests. 
% The code micro-average is computed over the listed benchmark instances. 
For math, we evaluate GSM8K~\citep{gsm8k}, GSM-Hard~\citep{gsmhard}, SVAMP~\citep{svamp} and MATH500~\citep{math500}. 
% The math micro-average is computed over the listed benchmark instances. 

\paragraph{Evaluation protocols.}
% The main protocol is two-stage evaluation. 
For each problem, the trained policy first generates a reasoning trace with greedy decoding. The frozen executor then consumes the problem and reasoning and generates the final artifact. Code artifacts are judged by executable tests, and math artifacts are judged by answer matching. The Reason RM and uplift estimator are not called during evaluation. For math only, we also report a supplementary direct self-solving protocol in which the policy itself generates both reasoning and final answer; this protocol is not the main evidence for the planner-executor claim.

\subsection{Main Results}

\paragraph{Training details.}
% For the main policy Group Relative Policy Optimization(GRPO) experiments, all Qwen2.5-7B, Llama3.1-8B, and Qwen3-4B policies are trained for 600 steps in bf16 with learning rate $5\times10^{-6}$. Both code and math runs use the same LoRA configuration: rank 16, alpha 32, dropout 0.05, targeting \texttt{q\_proj}, \texttt{k\_proj}, \texttt{v\_proj}, \texttt{o\_proj}, \texttt{gate\_proj}, \texttt{up\_proj}, and \texttt{down\_proj}. Full-parameter Qwen2.5-7B results are reported separately in Table~\ref{tab:qwen25-full}.

Compared with \modelname{}, Exec-only only trains the policy with end-to-end verifier reward instead of being trained with Eq.~\ref{eq:reward}. 

\paragraph{Results on Code Benchmarks.}
Table~\ref{tab:code-math-main} reports code results under the fixed two-stage protocol. Across all three model families, \method{} improves over Exec-only on every benchmark. On the micro-average, \method{} improves over Exec-only by $2.61$ percentage points for Qwen2.5-7B, $1.96$ points for Llama3.1-8B, and $2.44$ points for Qwen3-4B. These gains are especially meaningful because the executor is frozen during evaluation: the only component changed is the planner that produces the reasoning trace.

The code results show two complementary effects. First, execution-only RL already improves over the untrained two-stage baseline, indicating that verifier feedback is a useful signal for planner training. Second, \method{} consistently improves over Exec-only, showing that final execution reward alone is not sufficient. The improvement is largest on the harder LiveCodeBench public-test setting for Qwen2.5-7B and Llama3.1-8B, where small changes in the plan often determine whether the executor handles constraints, edge cases, and implementation details correctly. This supports the central claim that reasoning traces should be rewarded not only for downstream correctness, but also for whether they provide executor-consumable guidance.

\begin{table*}[t]
\centering
\scriptsize
\setlength{\tabcolsep}{4.2pt}
\renewcommand{\arraystretch}{1.08}
\caption{Two-stage results on code and math benchmarks. We use pass@1 percentages for code and accuracy for math entries to assess. During evaluation, the executor is initialized from the same model family as the planner. HE and HE+ denote HumanEval and HumanEval+, LCB denotes LiveCodeBench. All results reported in the table are averaged over three runs with different seeds. Micro avg. denotes the average accuracy of four benchmarks respectively.}
\vspace{2mm}
\label{tab:code-math-main}
\begin{tabular}{llccccc|ccccc}
\toprule
\multirow{2}{*}{Model}
& \multirow{2}{*}{Method}
& \multicolumn{5}{c|}{Code}
& \multicolumn{5}{c}{Math} \\
\cmidrule(lr){3-7}
\cmidrule(lr){8-12}
& 
& HE
& HE+
& MBPP
& LCB
& Micro avg.
& GSM8K
& GSM-Hard
& SVAMP
& MATH500
& Micro avg. \\
\midrule

\multirow{3}{*}{Qwen2.5-7B}
& Base
& 68.90 & 60.37 & 58.00 & 29.50 & 50.49
& 61.79 & 30.02 & 58.67 & 42.40 & 46.51 \\
& Exec-only
& 70.12 & 61.59 & 59.00 & 32.75 & 52.28
& 87.04 & 47.84 & 68.67 & 48.00 & 64.72 \\
& \method{}
& \best{72.56} & \best{64.02} & \best{61.20} & \best{36.00} & \best{54.89}
& \best{89.16} & \best{51.78} & \best{89.67} & \best{50.40} & \best{69.23} \\

\midrule

\multirow{3}{*}{Llama3.1-8B}
& Base
& 33.54 & 29.88 & 34.40 & 10.25 & 25.81
& 16.60 & 4.93 & 34.33 & 10.20 & 12.74 \\
& Exec-only
& 37.80 & 31.10 & 45.20 & 15.00 & 32.49
& 32.15 & 9.45 & 48.67 & 13.40 & 22.13 \\
& \method{}
& \best{39.02} & \best{32.93} & \best{46.40} & \best{18.25} & \best{34.45}
& \best{35.71} & \best{9.70} & \best{49.00} & \best{14.60} & \best{23.82} \\

\midrule

\multirow{3}{*}{Qwen3-4B}
& Base
& 79.27 & 75.61 & 63.00 & 56.50 & 64.74
& 88.48 & 52.16 & 86.00 & 67.00 & 71.20 \\
& Exec-only
& 81.09 & 76.83 & 63.60 & 58.00 & 65.88
& 89.01 & 52.01 & 88.67 & 66.40 & 71.50 \\
& \method{}
& \best{84.15} & \best{78.66} & \best{65.80} & \best{60.75} & \best{68.32}
& \best{89.23} & \best{53.22} & \best{89.00} & \best{68.20} & \best{72.34} \\

\bottomrule
\end{tabular}
\end{table*}

\paragraph{Results on Math Benchmarks.}
Table~\ref{tab:code-math-main} reports math results under the same two-stage planner-executor protocol. \method{} again improves over Exec-only across all model families and all listed benchmarks. On the micro-average, \method{} improves over Exec-only by $4.51$ percentage points for Qwen2.5-7B, $1.69$ points for Llama3.1-8B, and $0.84$ points for Qwen3-4B.

The math results are consistent with the code results but reveal a stronger dependence on model strength. For Qwen2.5-7B, \method{} gives a large micro-average gain over Exec-only, with particularly strong improvement on SVAMP and GSM-Hard. For Qwen3-4B, the base two-stage system is already strong, so the remaining headroom is smaller; nevertheless, \method{} still improves over Exec-only on every listed benchmark. This pattern suggests that executor-grounded reasoning rewards are most valuable when the planner must supply missing structure to a capable but imperfect executor, while still providing gains for stronger base systems.

\subsection{LoRA and Full-Parameter GRPO}

Table~\ref{tab:qwen25-full} compares LoRA and full-parameter GRPO for Qwen2.5-7B. On code, full-parameter training improves both objectives, but \method{} remains stronger than Exec-only under the same parameterization. In the full-parameter setting, \method{} reaches a code micro-average of $60.26\%$, compared with $57.90\%$ for Exec-only. On math, full-parameter training is not uniformly better than LoRA, but the objective-level comparison remains favorable: \method{} improves over Exec-only under both LoRA and full-parameter training.

These results indicate that the gains from \method{} are not merely a byproduct of parameter budget. On code, increasing trainable capacity raises the ceiling, and the executor-grounded reward continues to provide an additional gain over execution-only training. On math, LoRA \method{} is the strongest overall configuration, while full-parameter \method{} still outperforms full-parameter Exec-only. This suggests that the training objective and the adaptation regime play distinct roles: more trainable parameters can help, but they do not replace the need for a reward that distinguishes useful reasoning from reasoning that merely correlates with final correctness.

\subsection{Supplementary Math Direct Self-Solving}

The main evaluation fixes the executor and measures whether the trained planner produces better executor-consumable reasoning. A possible concern is that such training might overspecialize the policy to the frozen executor and harm its own direct problem-solving ability. To test this, we evaluate math policies in a direct self-solving protocol where the policy generates both the reasoning and final answer. Table~\ref{tab:math-direct} reports the results.

% \begin{table*}[t]
% \centering
% \small
% \setlength{\tabcolsep}{6pt}
% \renewcommand{\arraystretch}{1.08}
% \caption{Supplementary math direct self-solving results. All entries are answer accuracy percentages. In this protocol, the policy generates both reasoning and final answer while no frozen executor is used at evaluation time.}
% \label{tab:math-direct}
% \vspace{2mm}
% \begin{tabular}{llrrr}
% \toprule
% Model & Benchmark & Base & Exec-only & \method{} \\
% \midrule
% \multirow{5}{*}{Qwen2.5-7B}
% & GSM8K & 61.79 & 87.26 & \best{89.31} \\
% & GSM-Hard & 30.02 & 47.92 & \best{52.16} \\
% & SVAMP & 58.67 & 67.67 & \best{90.33} \\
% & MATH500 & 42.40 & 54.00 & \best{55.20} \\
% & Micro avg. & 46.51 & 65.62 & \best{70.19} \\
% \midrule
% \multirow{5}{*}{Llama3.1-8B}
% & GSM8K & 16.60 & 31.92 & \best{32.90} \\
% & GSM-Hard & 4.93 & 9.78 & \best{10.00} \\
% & SVAMP & 34.33 & 46.33 & \best{47.67} \\
% & MATH500 & 10.20 & 14.80 & \best{15.40} \\
% & Micro avg. & 12.74 & 22.19 & \best{22.86} \\
% \midrule
% \multirow{5}{*}{Qwen3-4B}
% & GSM8K & 88.48 & 88.93 & \best{89.16} \\
% & GSM-Hard & 52.16 & 52.39 & \best{52.69} \\
% & SVAMP & 86.00 & \best{88.67} & \best{88.67} \\
% & MATH500 & 67.00 & 68.60 & \best{69.40} \\
% & Micro avg. & 71.20 & 71.93 & \best{72.25} \\
% \bottomrule
% \end{tabular}
% \end{table*}
\begin{table*}[t]
\centering
\scriptsize
\setlength{\tabcolsep}{5pt}
\renewcommand{\arraystretch}{1.08}
\caption{Supplementary math direct self-solving results. All entries are answer accuracy percentages. In this protocol, the policy generates both reasoning and final answer while no frozen executor is used at evaluation time. All results reported in the table are averaged over three runs with different seeds.}
\label{tab:math-direct}
\resizebox{\textwidth}{!}{%
\begin{tabular}{llrrrrr}
\toprule
Model & Method & GSM8K & GSM-Hard & SVAMP & MATH500 & Micro avg. \\
\midrule

\multirow{3}{*}{Qwen2.5-7B}
& Base
& 61.79 & 30.02 & 58.67 & 42.40 & 46.51 \\
& Exec-only
& 87.26 & 47.92 & 67.67 & 54.00 & 65.62 \\
& \method{}
& \best{89.31} & \best{52.16} & \best{90.33} & \best{55.20} & \best{70.19} \\

\midrule

\multirow{3}{*}{Llama3.1-8B}
& Base
& 16.60 & 4.93 & 34.33 & 10.20 & 12.74 \\
& Exec-only
& 31.92 & 9.78 & 46.33 & 14.80 & 22.19 \\
& \method{}
& \best{32.90} & \best{10.00} & \best{47.67} & \best{15.40} & \best{22.86} \\

\midrule

\multirow{3}{*}{Qwen3-4B}
& Base
& 88.48 & 52.16 & 86.00 & 67.00 & 71.20 \\
& Exec-only
& 88.93 & 52.39 & \best{88.67} & 68.60 & 71.93 \\
& \method{}
& \best{89.16} & \best{52.69} & \best{88.67} & \best{69.40} & \best{72.25} \\

\bottomrule
\end{tabular}%
}
\end{table*}

\begin{table}[t]
\centering
\small
\setlength{\tabcolsep}{6pt}
\renewcommand{\arraystretch}{1.08}
\caption{Ablation on the number of executor comparisons $K$ used to estimate uplift during Qwen2.5-7B code training. All entries are pass@1 percentages.}
\label{tab:rollout-sweep}
\begin{tabular}{lrrr}
\toprule
Benchmark & $K=1$ & $K=3$ & $K=5$ \\
\midrule
HumanEval & 68.29 & \best{72.56} & 67.07 \\
HumanEval+ & 63.41 & \best{64.02} & 60.37 \\
MBPP-full & 60.80 & \best{61.20} & 60.40 \\
LiveCodeBench & 35.00 & \best{36.00} & 35.50 \\
Micro avg. & 53.75 & \best{54.89} & 53.18 \\
\bottomrule
\end{tabular}
\end{table}

The direct self-solving results rule out a simple negative-transfer explanation. \method{} improves or matches Exec-only on all listed math direct-solving settings, including a $4.57$ point micro-average gain for Qwen2.5-7B. Thus, the executor-grounded reward does not appear to train traces that are only useful to an external executor while degrading the policy's own ability to complete the solution. Instead, the learned reasoning remains useful when the same policy must also produce the final answer. We nevertheless treat this as supplementary evidence, since the main claim of this paper concerns the fixed planner-executor setting.

\subsection{Ablation Studies}

We run ablations on Qwen2.5-7B code tasks, where executable tests provide a high-precision verifier and the planner-executor separation is most direct.

\paragraph{Number of executor comparisons.}
The uplift estimate in Eq.~\ref{eq:uplift} depends on the number of executor comparisons used during training. Table~\ref{tab:rollout-sweep} varies this number. The main setting uses $K=3$.

The sweep shows that the uplift estimator is useful but not monotonically improved by more comparisons. With $K=1$, the reward can be too sensitive to a single executor outcome, so the planner may receive high credit for traces that happen to help one executor comparison but do not robustly guide the executor. Increasing to $K=3$ stabilizes this marginal-utility signal and gives the best aggregate performance. However, $K=5$ is weaker in this setting. A likely reason is that a more averaged uplift signal can become conservative: small positive and negative effects cancel, weakening the RM-weighted term that distinguishes actionable traces inside a GRPO group. Since evaluation is pass@1 with deterministic decoding, optimizing an overly smoothed training-time utility estimate is not necessarily aligned with the final evaluation target. We therefore use $K=3$ as a practical balance between reward stability, cost, and pass@1 alignment.

\begin{table*}[t]
\centering
\scriptsize
\setlength{\tabcolsep}{4pt}
\renewcommand{\arraystretch}{1.08}
\caption{Qwen2.5-7B LoRA and full-parameter GRPO results under the two-stage protocol. All results reported in the table are averaged over three runs with different seeds.}
\vspace{2mm}
\label{tab:qwen25-full}
\begin{tabular}{llrrrrr}
\toprule
Domain & Benchmark & Base & Exec-only LoRA & Exec-only Full & \method{} LoRA & \method{} Full \\
\midrule
\multirow{5}{*}{Code}
& HumanEval & 68.90 & 70.12 & 73.78 & 72.56 & \best{76.83} \\
& HumanEval+ & 60.37 & 61.59 & 68.29 & 64.02 & \best{70.12} \\
& MBPP-full & 58.00 & 59.00 & 61.00 & 61.20 & \best{63.60} \\
& LiveCodeBench & 29.50 & 32.75 & 43.25 & 36.00 & \best{45.25} \\
& Micro avg. & 50.49 & 52.28 & 57.90 & 54.89 & \best{60.26} \\
\midrule
\multirow{5}{*}{Math}
& GSM8K & 61.79 & 87.04 & 86.66 & \best{89.16} & 87.79 \\
& GSM-Hard & 30.02 & 47.84 & 48.29 & \best{51.78} & 49.28 \\
& SVAMP & 58.67 & 68.67 & 83.67 & \best{89.67} & 89.33 \\
& MATH500 & 42.40 & 48.00 & 48.80 & 50.40 & \best{52.00} \\
& Micro avg. & 46.51 & 64.72 & 66.17 & \best{69.23} & 67.95 \\
\bottomrule
\vspace{-2mm}
\end{tabular}
\end{table*}

\paragraph{Reward component ablations.}
Table~\ref{tab:reward-ablation} isolates the two key components of the \method{} reward. The No-uplift variant removes executor grounding from the RM term:
\[
\mathcal{R}(P,R)
=
0.5\,\Rexec(P,R)
+
0.5\,\mathrm{RM}_{\phi}(P,R).
\]
The RM-uplift only variant removes the end-to-end verifier anchor:
\[
\mathcal{R}(P,R)
=
\mathrm{RM}_{\phi}(P,R)\,\urm(P,R).
\]

\begin{table*}[t]
\centering
\small
\setlength{\tabcolsep}{4pt}
\renewcommand{\arraystretch}{1.08}
\caption{Ablation results of Qwen2.5-7B code reward. All entries are pass@1 percentages. Full \method{} combines verifier reward with uplift-weighted reasoning reward model.}
\vspace{2mm}
\label{tab:reward-ablation}
\begin{tabular}{lrrrrr}
\toprule
Benchmark & Base & Exec-only & \method{} & No-uplift & RM-uplift only \\
\midrule
HumanEval & 68.90 & 70.12 & \best{72.56} & 65.85 & 71.34 \\
HumanEval+ & 60.37 & 61.59 & \best{64.02} & 60.37 & 63.41 \\
MBPP-full & 58.00 & 59.00 & \best{61.20} & 58.40 & 58.60 \\
LiveCodeBench & 29.50 & 32.75 & \best{36.00} & 33.50 & 33.25 \\
Micro avg. & 50.49 & 52.28 & \best{54.89} & 51.55 & 52.69 \\
\bottomrule
\vspace{-2mm}
\end{tabular}
\end{table*}

\begin{figure*}[t]
\centering
\includegraphics[width=0.98\textwidth]{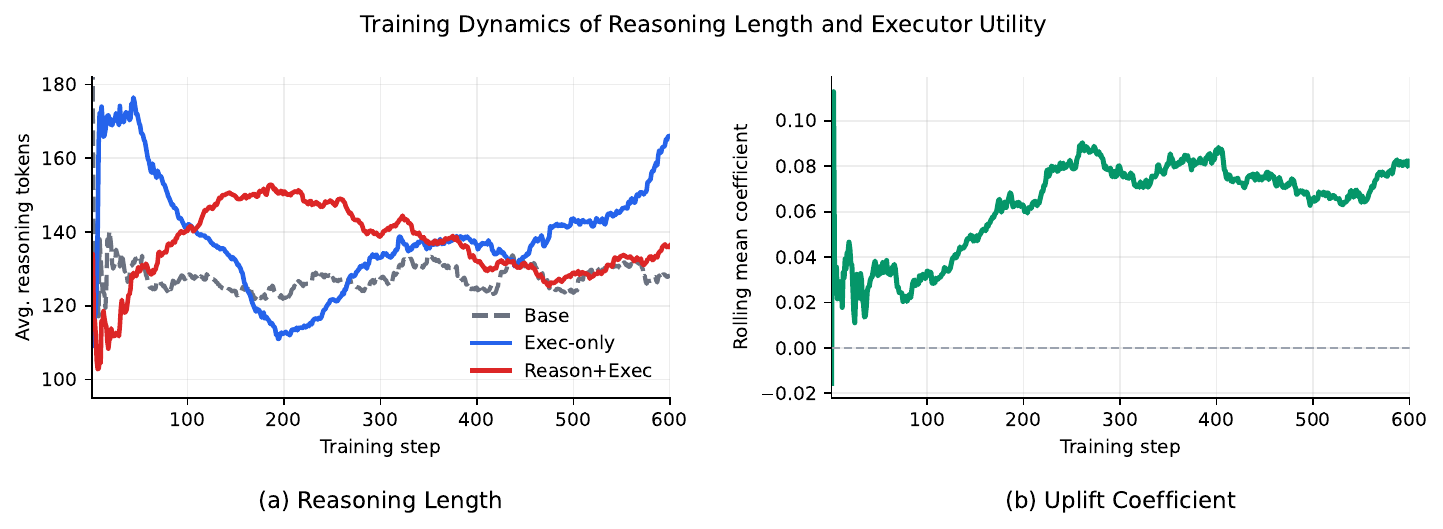}
\caption{Training dynamics of reasoning length and executor utility. \textbf{Left}: rolling mean of reasoning length for differend methods. The Base is evaluated on the same problem sequence and does not reflect parameter updates. \textbf{Right}: rolling mean of the uplift coefficient used in the \method{} reward.}
\label{fig:length-uplift}
\vspace{-2mm}
\end{figure*}

These ablations show that both reward axes are necessary. Removing uplift turns the RM term into an intrinsic rubric score detached from the executor. This variant is weaker than the full method on every benchmark and even falls below Exec-only on the micro-average. The failure mode is intuitive: a trace can be coherent and rubric-plausible while still being too abstract, misleading, or poorly matched to the executor prompt. Without measured uplift, high-scoring traces that do not improve executor success are not penalized.

The RM-uplift only variant has the opposite failure. It preserves executor grounding but removes the verifier anchor, so it can reward traces that improve over the no-reasoning baseline while still producing incorrect artifacts. Such relative improvement is useful, but not task success. This is visible on MBPP-full and LiveCodeBench, where RM-uplift only improves over Base but remains far below full \method{}. The full reward resolves both failures: the verifier term anchors optimization to correct artifacts, while the RM-uplift term filters reasoning credit through rubric quality and executor utility.

\subsection{Analysis: Reasoning Length and Executor Utility}

A natural alternative explanation is that \method{} improves performance by simply making the planner produce longer reasoning traces. We test this by measuring reasoning length during training. Table~\ref{tab:length-analysis} summarizes the last 150 training steps, corresponding to 300 on-policy reasoning samples. The Base row is computed by running the Qwen2.5-7B base planner on the same sequence of training problems, so it reflects problem-sequence variation rather than parameter updates.

\begin{table}[t]
\centering
\small
\setlength{\tabcolsep}{7pt}
\renewcommand{\arraystretch}{1.08}
\caption{Reasoning length statistics over the final training window. Token counts are computed with the Qwen2.5 tokenizer. The bin columns report percentages over 300 sampled reasoning traces.}
\label{tab:length-analysis}
\begin{tabular}{lrrrr}
\toprule
Model & Avg. tokens & Median & $0$--$128$ tokens & $\geq256$ tokens \\
\midrule
Base & 128.2 & 104.0 & 59.00 & 7.00 \\
Exec-only & 166.0 & 147.5 & 38.67 & 12.67 \\
\method{} & 136.3 & 125.5 & 53.67 & 5.00 \\
\bottomrule
\end{tabular}
\vspace{-2mm}
\end{table}

The length statistics and training curves reject the verbosity explanation. Exec-only produces longer reasoning than Base, averaging $166.0$ tokens with a larger long-tail fraction. In contrast, \method{} improves task performance while keeping the average length at $136.3$ tokens, nearly $30$ tokens shorter than Exec-only. Thus, the gain does not come from longer traces or format preference.

Figure~\ref{fig:length-uplift} further shows that \method{} increases executor utility without increasing reasoning length. The uplift coefficient gradually becomes more positive during training, while reasoning length remains moderate. This indicates that the planner learns to include information that changes executor behavior, rather than merely expanding the explanation. In other words, the reward shifts the planner toward compact, executor-useful traces.

% \subsection{Summary of Findings}

% Across code and math, \method{} improves over execution-only RL under the same fixed planner-executor evaluation chain. The gains hold across Qwen2.5-7B, Llama3.1-8B, and Qwen3-4B, and the supplementary math direct-solving results show no evidence that executor-grounded reasoning training harms the policy's own solving ability. The ablations show that Reason RM scores must be grounded by measured executor uplift, and that uplift-weighted reasoning quality still needs the verifier reward as an end-to-end anchor. Finally, the length analysis shows that \method{} does not obtain gains by simply producing longer traces; instead, it learns more compact reasoning that better guides the frozen executor.

\section{Conclusion}
\label{sec:conclusion}

In this paper, we introduced \modelname{}, a planner-executor framework that trains reasoning trajectories as executor-consumable intermediate artifacts. \modelname{} trains a reasoning planner with verifier feedback on frozen-executor outputs and a Reason RM score weighted by measured executor uplift. We also introduced \datasetname{}, a rubric-annotated reason-only dataset that pairs reference trajectories with targeted flawed trajectories for learning intermediate reasoning quality. Experiments on code and math benchmarks show that this training recipe improves fixed two-stage planner-executor systems over execution-only reinforcement learning. Together, these results highlight a practical direction for reasoning supervision: reward trajectories not only by how they read, but by whether they are accurate enough to help the executor produce a correct final artifact.

% \section{Additional Details}
% \label{app:details}

% \subsection{Reasoning group examples}

% The current data inspection contains representative code and math groups. In code, one pair asks whether a sequence has pairwise distinct elements: the positive reasoning uses set cardinality, while a wrong-algorithm negative only checks adjacent duplicates and misses non-adjacent repeats. Another code pair concerns maximum team formation and contrasts the correct upper bound $\min(c,m,\lfloor(c+m+x)/3\rfloor)$ with an incorrect constraint model. In math, examples include a wrong-operation negative that adds rather than multiplies, and a dropped-case negative that omits the final medication dose in a multi-step arithmetic problem.

% \subsection{Planned analyses}

% Before a submission version, we plan to add: paired significance tests, RM-score and utility correlations, win/loss flip analysis, reasoning-length controls, dataset scale statistics, leakage audits, and judge-consistency measurements.
\section{Dataset Construction Prompts}
\label{app:dataset-prompts}

This appendix gives the concrete construction templates used to build the reason-only supervision data. We separate three operations: deterministic reason-only cleaning, localized flawed-trace generation, and rubric annotation. The cleaning stage is rule-based rather than LLM-generated; we therefore present it as an implementation specification instead of a prompt. The flawed-trace and rubric stages use the prompt templates shown below.

\subsection{Reason-Only Cleaning Specifications}
\label{app:reason-only-cleaning}

\begin{specbox}{Code reason-only cleaning specification}
\textbf{Purpose.} Convert a raw code reasoning passage into an answer-free planning trace.

\textbf{Preserve.}
\begin{itemize}[leftmargin=1.2em, itemsep=1pt, topsep=2pt]
    \item algorithmic route and decomposition;
    \item data structures, invariants, and state updates;
    \item boundary cases, type constraints, and complexity considerations;
    \item executor-actionable implementation constraints.
\end{itemize}

\textbf{Remove or truncate.}
\begin{itemize}[leftmargin=1.2em, itemsep=1pt, topsep=2pt]
    \item complete implementation leakage such as code fences or pasteable code;
    \item strong code markers such as \texttt{def}, \texttt{class}, \texttt{return}, and \texttt{import} when filtering is enabled;
    \item the exact reference implementation if it appears in the reasoning;
    \item hardcoded test leakage and post-hoc explanation sections.
\end{itemize}

\textbf{Deterministic rules.}
Cut at \texttt{</think>} if present; cut before the exact reference action if it appears; cut before triple-backtick code fences; cut before explanation markers such as \texttt{\#\#\# Explanation}, \texttt{\#\# Explanation}, and \texttt{**Explanation:**}; strip \texttt{<think>} markers and surrounding whitespace.
\end{specbox}

\begin{specbox}{Math reason-only cleaning specification}
\textbf{Purpose.} Convert a raw math solution into an answer-free derivation trace.

\textbf{Preserve.}
\begin{itemize}[leftmargin=1.2em, itemsep=1pt, topsep=2pt]
    \item solution strategy and variable definitions;
    \item key intermediate derivations and equation transformations;
    \item unit conversions and necessary intermediate quantities.
\end{itemize}

\textbf{Remove.}
\begin{itemize}[leftmargin=1.2em, itemsep=1pt, topsep=2pt]
    \item GSM8K-style final-answer markers such as \texttt{\#\#\#\# final\_answer};
    \item direct final-answer leakage;
    \item answer-only shortcuts that do not support the derivation.
\end{itemize}

\textbf{Deterministic rules.}
Remove the GSM8K final-answer marker; replace calculator annotations of the form \texttt{<<expr>>value} with \texttt{expr}; strip trailing blank lines. Synthetic flawed math traces are loaded from the cleaned negative field and then stripped.
\end{specbox}

\subsection{Localized Flawed-Trace Generation}
\label{app:flawed-generation}

The flawed-trace generator receives a cleaned reference reasoning passage and injects one localized perturbation. The goal is to produce a plausible, task-relevant trace whose reasoning quality or solution support is weakened without turning it into random text.

\subsubsection{Code Flawed-Trace Generation}

\begin{table}[h]
\centering
\small
\setlength{\tabcolsep}{5pt}
\renewcommand{\arraystretch}{1.05}
\caption{Code perturbation types used for flawed reasoning generation.}
\label{tab:app-code-perturbations}
\begin{tabularx}{\linewidth}{lY}
\toprule
Perturbation type & Mutation target \\
\midrule
\texttt{wrong\_algorithm\_choice} & Mutate the high-level algorithm or problem decomposition. \\
\texttt{missing\_edge\_case} & Remove or weaken boundary handling and case coverage. \\
\texttt{off\_by\_one} & Corrupt index arithmetic or inclusive/exclusive range reasoning. \\
\texttt{incorrect\_invariant} & Corrupt a maintained state assumption, loop invariant, or proof invariant. \\
\texttt{complexity\_unaware\_plan} & Ignore input-scale constraints or propose an infeasible plan. \\
\texttt{pseudo\_solution\_without\_executable\_detail} & Replace actionable details with vague pseudo-solution text. \\
\texttt{verbose\_irrelevant\_explanation} & Replace useful reasoning with verbose off-target discussion. \\
\bottomrule
\end{tabularx}
\end{table}

\begin{promptlisting}{Code flawed-trace generation: system prompt}
You will be given one passage.

You need to modify only a small part of this passage and inject an error of the specified type.

You should keep all other parts unchanged as much as possible.

Important requirements:
- Do not rewrite the whole passage.
- Only make small local changes.
- Preserve most of the original wording, order, structure, and reasoning style.
- Keep the same format as the input passage.
- Change only the part needed to inject the requested error type.
- Keep all other parts unchanged whenever possible.
- Do not add meta commentary.
- Do not explain what error you introduced.
- Do not mention that the reasoning is wrong.
- Do not point out contradictions, inconsistencies, or failures in the modified reasoning.
- Write the modified passage as if it were a normal reasoning attempt.
- Do not output code blocks.
- Do not add code or final answers.
- Directly output the modified reasoning passage only.
\end{promptlisting}

\begin{promptlisting}{Code flawed-trace generation: user template}
The error type is: {display_name}
This error type means: {definition}

The passage is:
{raw_reason}
\end{promptlisting}

\begin{specbox}{Code flawed-trace parsing and filtering}
The returned text is wrapped as the mutated reasoning. We reject empty generations and generations containing leakage markers such as triple-backtick code fences, \texttt{def}, \texttt{class}, \texttt{<call>}, or \texttt{final code}. During group construction, we additionally drop placeholder negatives, near-duplicates, too-short negatives, and examples without a recognized perturbation type.
\end{specbox}

\subsubsection{Math Flawed-Trace Generation}

\begin{table}[h]
\centering
\small
\setlength{\tabcolsep}{5pt}
\renewcommand{\arraystretch}{1.05}
\caption{Math perturbation types used for flawed reasoning generation.}
\label{tab:app-math-perturbations}
\begin{tabularx}{\linewidth}{lY}
\toprule
Perturbation type & Mutation target \\
\midrule
\texttt{arithmetic\_slip} & Corrupt a small but consequential numeric calculation. \\
\texttt{wrong\_operation} & Use the wrong operation, formula, or equation setup. \\
\texttt{dropped\_case} & Omit a required condition, case, entity, or final step. \\
\texttt{unit\_mismatch} & Confuse units, rates, time spans, or quantities. \\
\texttt{unsupported\_jump} & Jump to a conclusion without the needed intermediate support. \\
\texttt{premature\_answer} & Stop at an intermediate value and treat it as the final result. \\
\texttt{verbose\_content\_free} & Add generic filler instead of mathematically useful content. \\
\bottomrule
\end{tabularx}
\end{table}

\begin{promptlisting}{Math flawed-trace generation: system prompt}
You will be given one math reasoning passage.

Create an internally coherent but mathematically incorrect reasoning trace.
Inject the specified mathematical reasoning error in 1-4 places in this passage.

Important requirements:
- Do not rewrite the whole passage.
- Preserve most of the original wording, order, structure, and reasoning style.
- Keep the same format as the input passage.
- Change only the part needed to inject the requested error type.
- After introducing the error, update any downstream calculations so the reasoning remains self-consistent.
- The final answer implied by the reasoning must be different from the original correct answer.
- Keep all unrelated parts unchanged whenever possible.
- Avoid isolated contradictions such as writing 7 * 3 = 22 unless the requested error type is specifically arithmetic_slip.
- Do not add meta commentary.
- Do not explain what error you introduced.
- Do not mention that the reasoning is wrong.
- Do not point out contradictions, inconsistencies, or failures in the modified reasoning.
- Write the modified passage as if it were a normal math reasoning attempt.
- Do not output code blocks.
- Do not add a separate final answer line.
- Directly output the modified reasoning passage only.
\end{promptlisting}

\begin{promptlisting}{Math flawed-trace generation: user template}
The math error type is: {display_name}
This error type means: {definition}
Target negative kind slug: {negative_kind}
Mutation target: {mutation_target}
The final answer field is provided only as context and should not be copied as a separate final answer: {action_gt}

The math reasoning passage is:
{raw_reason}
\end{promptlisting}

\begin{specbox}{Math flawed-trace parsing and filtering}
The returned text is stored as a cleaned negative reasoning trace. Group construction deduplicates repeated negatives and keeps only groups with at least four valid flawed traces. Positive GSM8K traces are cleaned before perturbation by removing final-answer markers and calculator annotations.
\end{specbox}

\subsection{Rubric Annotation Prompts}
\label{app:rubric-prompts}

The rubric judge is used only to annotate the offline training data. It may receive the final code or answer to assess whether the reasoning supports the final artifact. The trained Reason RM and the online planner reward do not receive final answers, generated code, executor outputs, or verifier outcomes as input.

\subsubsection{Code Rubric Judge}

\begin{promptlisting}{Code rubric judge prompt}
You are judging the quality of reasoning for a code task.

You will be given:
- a coding problem
- an answer-free reasoning text
- the final code / result

Score the reasoning itself instead of the final answer alone.

Rubric dimensions (0-10, where 10 is best):
1. task_understanding: Does the reasoning correctly understand the task and constraints?
2. plan_quality: Is the proposed approach sensible and appropriate?
3. step_coherence: Are the reasoning steps logically connected and internally consistent?
4. action_support: Does the reasoning actually support the final code/result?
5. non_leakage: Does the reasoning avoid simply dumping the final implementation or over-leaking the final action?

Scoring rules:
- Each dimension must be scored from 0 to 10.
- Decimal scores are allowed.
- Use at most one decimal place when needed.
- 10 means excellent reasoning quality on that dimension.
- 0 means the reasoning completely fails on that dimension.

Then produce:
- rubric_score: overall score from 0 to 10 (decimal allowed)
- rubric_label: one of [strong, acceptable, weak, bad]
- rubric_reason: short explanation in 1-3 sentences

Return STRICT JSON only, with this exact schema:
{
  "task_understanding": 0.0,
  "plan_quality": 0.0,
  "step_coherence": 0.0,
  "action_support": 0.0,
  "non_leakage": 0.0,
  "rubric_score": 0.0,
  "rubric_label": "bad",
  "rubric_reason": "..."
}

Problem:
{problem}

Reasoning:
{reason_clean}

Final code/result:
{action_gt}
\end{promptlisting}

\subsubsection{Math Rubric Judge}

\begin{promptlisting}{Math rubric judge: system prompt}
You are an expert math reasoning rubric judge. Evaluate the quality of the reasoning process for solving math problems, not just whether the final answer string is present.
\end{promptlisting}

\begin{promptlisting}{Math rubric judge: user prompt}
You are judging the quality of reasoning for a math problem.

You will be given:
- a math problem
- a mathematical reasoning text
- the final answer

Score the reasoning itself instead of the final answer alone. A reasoning trace can deserve a low score even when it states the correct answer if the derivation is unsupported, incoherent, or mathematically invalid.

Rubric dimensions (0-10, where 10 is best):
1. problem_understanding: Does the reasoning correctly identify the quantities, conditions, and goal?
2. solution_strategy: Is the mathematical approach appropriate and efficient for the problem?
3. step_coherence: Are the reasoning steps logically connected and internally consistent?
4. calculation_correctness: Are computations, algebraic manipulations, and unit conversions correct?
5. answer_support: Does the reasoning actually justify the final answer without unsupported jumps?

Scoring rules:
- Each dimension must be scored from 0 to 10.
- Decimal scores are allowed.
- Use at most one decimal place when needed.
- 10 means excellent reasoning quality on that dimension.
- 0 means the reasoning completely fails on that dimension.

Then produce:
- rubric_score: overall score from 0 to 10 (decimal allowed)
- rubric_label: one of [strong, acceptable, weak, bad]
- rubric_reason: short explanation in 1-3 sentences

Return STRICT JSON only, with this exact schema:
{
  "problem_understanding": 0.0,
  "solution_strategy": 0.0,
  "step_coherence": 0.0,
  "calculation_correctness": 0.0,
  "answer_support": 0.0,
  "rubric_score": 0.0,
  "rubric_label": "bad",
  "rubric_reason": "..."
}

Problem:
{problem}

Reasoning:
{reason_clean}

Final answer:
{action_gt}
\end{promptlisting}

\begin{table}[h]
\centering
\small
\setlength{\tabcolsep}{7pt}
\renewcommand{\arraystretch}{1.05}
\caption{Conversion from raw judge scores to Reason RM training labels. Dimension labels are used by the rubric heads, while the normalized total score is used by the aggregate head.}
\label{tab:app-score-conversion}
\begin{tabular}{cl}
\toprule
RM label & Raw judge-score interval \\
\midrule
0 & $[0,2)$ \\
1 & $[2,4)$ \\
2 & $[4,6)$ \\
3 & $[6,8)$ \\
4 & $[8,10]$ \\
\bottomrule
\end{tabular}
\end{table}

The exact conversion is
\[
y_k=\min\!\left(4,\max\!\left(0,\left\lfloor s_k/2 \right\rfloor\right)\right),
\qquad
s_{\mathrm{total}}=\mathrm{clip}(s_{\mathrm{rubric}}/10,0,1),
\]
where $s_k$ is a raw dimension score and $s_{\mathrm{rubric}}$ is the raw total rubric score.

\subsection{Group Schema and Reason RM Input Rendering}
\label{app:dataset-schema}

\begin{schemalisting}{Reasoning-group schema}
{
  "problem_id": "code_000000 | math_gsm8k_000000",
  "source": "code | gsm8k",
  "task_type": "math optional",
  "problem": "...",
  "reference_solution": "...",
  "positive_pool": [
    {
      "reasoning": "...",
      "label": "positive",
      "rubric": {
        "dimension_name": 0,
        "total": 0.0
      },
      "rubric_label": "positive | strong | acceptable | weak | bad",
      "rubric_score_raw": 10.0,
      "raw_reason_length": 0,
      "clean_reason_length": 0
    }
  ],
  "negative_bank": [
    {
      "reasoning": "...",
      "negative_kind": "...",
      "negative_index": 0,
      "label": "negative",
      "rubric": {
        "dimension_name": 0,
        "total": 0.0
      },
      "rubric_label": "bad",
      "rubric_score_raw": 0.0,
      "rubric_reason": "...",
      "raw_reason_length": 0,
      "clean_reason_length": 0,
      "metadata": {}
    }
  ],
  "metadata": {
    "source_dataset": "...",
    "source_row_index": 0,
    "negative_count": 0,
    "dimension_names": []
  }
}
\end{schemalisting}

\begin{schemalisting}{Reason RM input rendering}
<task>
...
</task>
<type>code|math</type>
<problem>
...
</problem>
<reasoning>
...
</reasoning>
\end{schemalisting}

The Reason RM input contains only the task type, problem text, and candidate reasoning. It excludes the reference solution, final answer, generated code, executor output, and verifier result. This separation prevents the Reason RM from learning a shortcut based on final-artifact correctness rather than reasoning quality.

\section{\datasetname{} Statistics and Reason RM Validation}
\label{app:rm-validation}

This appendix audits the grouped supervision data used to train the Reason RM and validates whether the trained RM learns the intended reason-quality ordering. The main paper describes the full code-and-math construction. Here we report the detailed exported audit for the GSM8K-derived math split, because this split contains complete per-perturbation metadata and all-negative held-out RM predictions. The goal is not to claim broad reward-model generalization beyond the construction distribution, but to verify that the learned RM reliably distinguishes high-quality reference reasoning from localized flawed traces under the same grouped supervision format used for training.

\subsection{Grouped Data Statistics}
\label{app:grouped-data-statistics}

Table~\ref{tab:app-math-group-stats} summarizes the retained math reasoning groups. Starting from 3,000 GSM8K seed problems, the construction keeps 2,989 groups after filtering, corresponding to a retention rate of $99.63\%$. Each retained group contains one reference trace and at least four flawed traces, with an average of $6.28$ flawed traces per problem. Lengths in this appendix are whitespace-token counts computed from the stored reasoning text.

\begin{table}[h]
\centering
\small
\setlength{\tabcolsep}{6pt}
\renewcommand{\arraystretch}{1.08}
\caption{Statistics of the audited GSM8K-derived \datasetname{} split.}
\label{tab:app-math-group-stats}
\begin{tabular}{lrrrrrr}
\toprule
Split & Seed problems & Kept groups & Reference traces & Flawed traces & Avg. ref len. & Avg. flawed len. \\
\midrule
Math & 3000 & 2989 & 2989 & 18764 & 49.95 & 78.85 \\
\bottomrule
\end{tabular}
\end{table}

The high retention rate indicates that the perturbation pipeline rarely fails to produce enough valid local negatives. The flawed traces are longer on average than the references, mainly because the perturbation set includes verbose content-free reasoning. This length asymmetry is useful for evaluation: a reward model that simply favors longer reasoning would be pulled toward some flawed candidates, whereas a valid reason-quality model should still rank compact reference reasoning above longer but less supportive traces.

\subsection{Perturbation Coverage}
\label{app:perturbation-coverage}

Table~\ref{tab:app-math-perturbation-stats} reports the perturbation distribution and average rubric score of flawed traces. The seven perturbation types are all represented at nontrivial frequency, with the largest type accounting for $15.90\%$ of negatives and the smallest accounting for $10.89\%$. The average normalized judge scores are low across all perturbation types, confirming that the perturbations reduce reasoning quality while preserving task relevance.

\begin{table}[h]
\centering
\small
\setlength{\tabcolsep}{6pt}
\renewcommand{\arraystretch}{1.08}
\caption{Perturbation statistics for flawed math reasoning traces. Judge scores are normalized to $[0,1]$.}
\label{tab:app-math-perturbation-stats}
\begin{tabular}{lrrrr}
\toprule
Perturbation type & Count & Share & Avg. judge score & Avg. length \\
\midrule
\texttt{arithmetic\_slip} & 2628 & 14.01\% & 0.36 & 49.92 \\
\texttt{dropped\_case} & 2044 & 10.89\% & 0.32 & 55.57 \\
\texttt{premature\_answer} & 2781 & 14.82\% & 0.37 & 44.17 \\
\texttt{unit\_mismatch} & 2616 & 13.94\% & 0.25 & 56.91 \\
\texttt{unsupported\_jump} & 2853 & 15.21\% & 0.32 & 49.19 \\
\texttt{verbose\_content\_free} & 2858 & 15.23\% & 0.29 & 235.44 \\
\texttt{wrong\_operation} & 2984 & 15.90\% & 0.23 & 50.22 \\
\bottomrule
\end{tabular}
\end{table}

Two patterns are important. First, the perturbation set is not dominated by a single easy flaw type: arithmetic errors, wrong operations, unit mistakes, dropped cases, unsupported jumps, premature stopping, and verbose non-supportive traces all appear frequently. This matters because the Reason RM should learn a general notion of reasoning support rather than a detector for one synthetic artifact. Second, \texttt{verbose\_content\_free} traces are by far the longest but still receive low judge scores. This directly supports the design goal of discouraging verbosity as a shortcut for reasoning quality.

\subsection{Rubric Label Distribution}
\label{app:rubric-label-distribution}

The rubric labels are produced by discretizing raw judge scores into five classes, from 0 to 4. Table~\ref{tab:app-dimension-distribution} groups these labels into low, middle, good, and strong regions. The distribution shows that the perturbations are local rather than destructive. Many flawed traces still receive high problem-understanding labels, because they often preserve the original problem setup. However, dimensions that depend on actually supporting the solution, especially calculation correctness and answer support, shift strongly toward low labels.

\begin{table}[h]
\centering
\small
\setlength{\tabcolsep}{6pt}
\renewcommand{\arraystretch}{1.08}
\caption{Rubric label distribution over all annotated math candidates. Low combines labels 0 and 1.}
\label{tab:app-dimension-distribution}
\begin{tabular}{lrrrr}
\toprule
Dimension & Low: 0--1 & Mid: 2 & Good: 3 & Strong: 4 \\
\midrule
Problem understanding & 11.5\% & 13.8\% & 36.2\% & 38.5\% \\
Solution strategy & 20.8\% & 20.4\% & 23.2\% & 35.7\% \\
Step coherence & 52.5\% & 21.8\% & 10.1\% & 15.6\% \\
Calculation correctness & 72.1\% & 8.7\% & 3.5\% & 15.7\% \\
Answer support & 83.6\% & 2.4\% & 0.2\% & 13.8\% \\
\bottomrule
\end{tabular}
\end{table}

This distribution is desirable for grouped reason-only supervision. If all flawed traces simply misunderstood the problem, the RM could solve the task by detecting superficial topic mismatch. Instead, many flawed traces still look relevant at the problem-understanding level, while failing in coherence, calculation, or answer support. The grouped comparison therefore forces the RM to attend to whether the trace actually sustains a valid solution path.

\subsection{Held-Out Reason RM Validation}
\label{app:heldout-rm-validation}

We validate the Reason RM on held-out GSM8K perturbation groups. The validation uses an all-negative protocol: for each held-out problem, the reference trace is compared against all available flawed traces in the group, rather than against a single sampled negative. This directly tests the use case needed by the online reward: the RM should assign higher scores to reliable reasoning than to plausible local perturbations.

\begin{table*}[h]
\centering
\scriptsize
\setlength{\tabcolsep}{4pt}
\renewcommand{\arraystretch}{1.08}
\caption{Held-out Reason RM validation on GSM8K perturbation groups. Pairwise accuracy measures whether the reference trace scores above a flawed trace from the same problem. Group accuracy measures whether the reference trace is the top-scoring candidate in the group.}
\label{tab:app-rm-validation}
\resizebox{\textwidth}{!}{
\begin{tabular}{lrrrrrrrrr}
\toprule
RM & Groups & Candidates & Neg. pairs & Pairwise acc. & Group acc. & Total Spearman & Total MAE & Total RMSE & Dim. acc. \\
\midrule
Math LoRA-CE step504 & 299 & 2186 & 1887 & 99.15\% & 96.32\% & 0.791 & 0.105 & 0.156 & 66.68\% \\
\bottomrule
\end{tabular}
}
\end{table*}

The RM shows strong separation between reference and flawed reasoning. Its mean score for reference traces is $0.933$, compared with $0.273$ for flawed traces, giving a mean pairwise margin of $0.659$. The high pairwise accuracy shows that the RM reliably rejects localized perturbations, while the high group accuracy shows that this ranking remains robust when all flawed candidates compete against the reference simultaneously. The score correlation and low total-score error further indicate that the RM is not only learning a binary reference-versus-flaw separator, but also preserving a useful ordering of reasoning quality.

\subsection{Validation by Perturbation Type}
\label{app:rm-perturbation-breakdown}

Table~\ref{tab:app-rm-perturbation-breakdown} breaks down pairwise ranking accuracy by perturbation type. The RM performs well across all seven flaw categories, including subtle arithmetic slips and more structural wrong-operation or unit-mismatch errors.

\begin{table}[h]
\centering
\small
\setlength{\tabcolsep}{6pt}
\renewcommand{\arraystretch}{1.08}
\caption{Held-out Reason RM ranking accuracy by perturbation type. Mean margin is the average RM score difference between the reference trace and the flawed trace.}
\label{tab:app-rm-perturbation-breakdown}
\begin{tabular}{lrrr}
\toprule
Perturbation type & Pairs & Pairwise acc. & Mean margin \\
\midrule
\texttt{arithmetic\_slip} & 271 & 97.79\% & 0.520 \\
\texttt{dropped\_case} & 195 & 98.46\% & 0.614 \\
\texttt{premature\_answer} & 281 & 99.64\% & 0.650 \\
\texttt{unit\_mismatch} & 264 & 99.62\% & 0.730 \\
\texttt{unsupported\_jump} & 286 & 99.30\% & 0.648 \\
\texttt{verbose\_content\_free} & 292 & 99.32\% & 0.697 \\
\texttt{wrong\_operation} & 298 & 99.66\% & 0.733 \\
\bottomrule
\end{tabular}
\end{table}

The lowest accuracy appears on \texttt{arithmetic\_slip}, which is expected because these perturbations can be extremely local and may leave most of the surrounding derivation intact. Even there, the RM reaches $97.79\%$ pairwise accuracy. The largest margins appear for \texttt{wrong\_operation} and \texttt{unit\_mismatch}, where the perturbation changes the mathematical structure or quantity semantics in a way that should strongly reduce executor usefulness. This breakdown is consistent with the intended role of the RM: it is sensitive not only to fluent formatting, but also to whether the reasoning trace preserves the calculation and support needed for a correct downstream artifact.

\subsection{Takeaways}
\label{app:rm-validation-takeaways}

The audit supports three conclusions. First, the grouped construction creates dense same-problem comparisons: almost every seed yields a valid group, and each group contains multiple localized flawed traces. Second, the perturbation distribution is diverse enough that the RM cannot rely on a single artifact such as length, topic mismatch, or generic incoherence. Third, the trained Reason RM accurately recovers the intended within-group ordering on held-out perturbation groups, with high pairwise and group-level ranking accuracy. These properties justify using the RM as the intrinsic reasoning-quality component in TraceLift, while the executor-uplift term in the main reward further filters this quality signal by downstream utility.

\section{Additional Ablations and Training Dynamics}
\label{app:additional-ablations-dynamics}

This appendix provides diagnostic views of the ablation and training-dynamics results. The main text reports the absolute benchmark numbers. Here we focus on deltas, reward failure modes, and temporal behavior, which better expose why each component of \method{} is needed. All code ablations in this section use the Qwen2.5-7B two-stage code setting. During training-time executor comparisons, the with-reasoning and no-reasoning branches use matched sampling settings with temperature $0.5$; final evaluation uses greedy decoding with temperature $0$.

\subsection{Uplift Rollout Count}
\label{app:k-sweep-diagnostics}

The uplift term estimates whether a candidate trace changes the frozen executor's success probability relative to a no-reasoning baseline. Table~\ref{tab:app-k-delta} reports drops relative to the main $K=3$ setting. This delta view makes clear that $K=3$ is not only best on the aggregate score, but also consistently stronger than both $K=1$ and $K=5$ across all four code benchmarks.

\begin{table}[h]
\centering
\small
\setlength{\tabcolsep}{7pt}
\renewcommand{\arraystretch}{1.08}
\caption{Effect of the number of executor comparisons used for uplift estimation. Entries are percentage-point changes relative to the main $K=3$ setting. Negative values indicate a drop from $K=3$.}
\label{tab:app-k-delta}
\begin{tabular}{lrr}
\toprule
Benchmark & $K=1$ minus $K=3$ & $K=5$ minus $K=3$ \\
\midrule
HumanEval & $-4.27$ & $-5.49$ \\
HumanEval+ & $-0.61$ & $-3.65$ \\
MBPP-full & $-0.40$ & $-0.80$ \\
LiveCodeBench & $-1.00$ & $-0.50$ \\
Micro avg. & $-1.14$ & $-1.71$ \\
\bottomrule
\end{tabular}
\end{table}

The pattern suggests a bias--variance tradeoff in the executor-utility estimate. With $K=1$, the reward is high variance: a trace can be credited because one sampled executor completion happens to pass, even when the trace is not robustly useful. Increasing to $K=3$ reduces this noise while preserving enough contrast between traces in the same GRPO group. With $K=5$, the signal becomes more conservative: small positive and negative changes are more likely to average out, which weakens the relative advantage assigned to traces that provide sharp but localized executor guidance. Since final evaluation is deterministic pass@1, the best training-time estimator is not necessarily the most averaged one; it is the one that best preserves useful contrast for pass@1 optimization.

\subsection{Reward Component Diagnostics}
\label{app:reward-component-diagnostics}

Table~\ref{tab:app-reward-drops} reports how much each ablated reward drops relative to the full \method{} reward. Unlike the main ablation table, this view isolates the magnitude and location of each failure mode.

\begin{table}[h]
\centering
\small
\setlength{\tabcolsep}{6pt}
\renewcommand{\arraystretch}{1.08}
\caption{Reward-component diagnostics. Entries are percentage-point drops relative to full \method{}.}
\label{tab:app-reward-drops}
\begin{tabular}{lrrr}
\toprule
Benchmark & No-uplift & RM-uplift only & LLM-as-judge \\
\midrule
HumanEval & $-6.71$ & $-1.22$ & $-5.49$ \\
HumanEval+ & $-3.65$ & $-0.61$ & $-1.82$ \\
MBPP-full & $-2.80$ & $-2.60$ & $-2.40$ \\
LiveCodeBench & $-2.50$ & $-2.75$ & $-4.25$ \\
Micro avg. & $-3.34$ & $-2.20$ & $-3.34$ \\
\bottomrule
\end{tabular}
\end{table}

The three ablations fail for different reasons. The No-uplift reward removes the executor-consumption check from the reasoning score. Its large drop on HumanEval indicates that a rubric-good trace is not always an executor-useful trace: it may be coherent and fluent while still omitting the exact edge condition or implementation constraint that the executor needs. The RM-uplift-only reward keeps executor grounding, but removes the verifier anchor. It is closer to full \method{} on HumanEval and HumanEval+, where relative executor improvement often aligns with pass/fail success, but it loses more on MBPP-full and LiveCodeBench, where a trace can improve the executor relative to the no-reasoning branch without making the final artifact correct. This shows that uplift is a useful marginal-utility signal, but it should not replace the task verifier. Finally, the LLM-as-judge variant underperforms because the judge score is not calibrated enough to serve as a dense on-policy reward.

Table~\ref{tab:app-reward-summary-delta} summarizes the same point at the aggregate level by comparing each reward with Exec-only and full \method{}.

\begin{table}[h]
\centering
\small
\setlength{\tabcolsep}{7pt}
\renewcommand{\arraystretch}{1.08}
\caption{Aggregate reward diagnostics on Qwen2.5-7B code. Deltas are percentage points on the code micro-average.}
\label{tab:app-reward-summary-delta}
\begin{tabular}{lrrr}
\toprule
Reward & Micro avg. & $\Delta$ vs. Exec-only & $\Delta$ vs. \method{} \\
\midrule
Exec-only & 52.28 & 0.00 & $-2.61$ \\
No-uplift & 51.55 & $-0.73$ & $-3.34$ \\
RM-uplift only & 52.69 & $+0.41$ & $-2.20$ \\
LLM-as-judge & 51.55 & $-0.73$ & $-3.34$ \\
\method{} & 54.89 & $+2.61$ & 0.00 \\
\bottomrule
\end{tabular}
\end{table}

This aggregate view supports the design of Eq.~\ref{eq:reward}. Reason-quality supervision alone is insufficient if it is detached from the executor. Executor uplift alone is insufficient if it is detached from final task success. A generic LLM judge is insufficient if its scores are saturated or poorly calibrated. The full reward combines all three requirements: task success, intrinsic reasoning quality, and measured executor utility.

\subsection{Why Direct LLM Judging Is Not Enough}
\label{app:llm-judge-diagnostics}

The LLM-as-judge replacement uses the same rubric text as the Reason RM, but queries Qwen2.5-7B-Instruct directly during reward computation. The final score is normalized as $s_{\mathrm{judge}}=\mathrm{rubric\_score}/10$ and used in
\[
\mathcal{R}(P,R)
=
0.5\,\Rexec(P,R)
+
0.5\,s_{\mathrm{judge}}(P,R)\,u_{\mathrm{exec}}(P,R).
\]
The ablation is weaker than the trained Reason RM, and the logged judge scores explain why. Table~\ref{tab:app-judge-saturation} reports the score saturation diagnostic from on-policy samples.

\begin{table}[h]
\centering
\small
\setlength{\tabcolsep}{8pt}
\renewcommand{\arraystretch}{1.08}
\caption{LLM-as-judge score saturation diagnostic from logged on-policy samples.}
\label{tab:app-judge-saturation}
\begin{tabular}{lr}
\toprule
Diagnostic & Value \\
\midrule
Logged samples & 600 \\
Samples with $s_{\mathrm{judge}}=1.0$ & 573 \\
Saturation rate & 95.50\% \\
Mean judge score & 0.990 \\
\bottomrule
\end{tabular}
\end{table}

This saturation makes the judge almost non-discriminative inside a GRPO group. When most candidate traces receive near-perfect rubric scores, the reward term $s_{\mathrm{judge}}\cdot u_{\mathrm{exec}}$ collapses toward uplift-only behavior and loses the ability to prefer traces that are cleaner, more complete, or less misleading. In contrast, the trained Reason RM is optimized on same-problem reference-versus-perturbation groups, so its score is shaped to resolve exactly the local distinctions that arise during planner training. This explains why merely prompting an instruction-tuned model with the rubric does not substitute for a calibrated Reason RM.

\subsection{LLM-as-judge replacement.}
Table~\ref{tab:judge-placeholder} replaces the trained Reason RM with a direct LLM-as-judge score from Qwen2.5-7B-Instruct under the same rubric. The reward is
\[
\mathcal{R}(P,R)
=
0.5\,\Rexec(P,R)
+
0.5\,s_{\mathrm{judge}}(P,R)\,\urm(P,R),
\]
where $s_{\mathrm{judge}}$ is the normalized rubric score returned by the judge model.

\begin{table}[t]
\centering
\small
\setlength{\tabcolsep}{7pt}
\renewcommand{\arraystretch}{1.08}
\caption{Ablation results of Qwen2.5-7B LLM-as-judge replacement. All entries are pass@1 percentages.}
\label{tab:judge-placeholder}
\begin{tabular}{lrrr}
\toprule
Benchmark & Base & \method{} & LLM-as-judge \\
\midrule
HumanEval & 68.90 & \best{72.56} & 67.07 \\
HumanEval+ & 60.37 & \best{64.02} & 62.20 \\
MBPP-full & 58.00 & \best{61.20} & 58.80 \\
LiveCodeBench & 29.50 & \best{36.00} & 31.75 \\
Micro avg. & 50.49 & \best{54.89} & 51.54 \\
\bottomrule
\end{tabular}
\end{table}

The LLM-as-judge variant is substantially weaker than the trained Reason RM, with a micro-average of $51.54\%$ compared with $54.89\%$ for \method{}. This result suggests that executor grounding alone is not sufficient if the reasoning-quality score is noisy or poorly calibrated. A generic instruction-tuned judge can follow the rubric at a surface level, but it is not trained on the grouped reference-versus-perturbation structure of \datasetname{} and may over-credit fluent or plausible traces that miss subtle algorithmic constraints. In GRPO, such score noise is amplified because rewards are compared within small on-policy groups; even when multiplied by uplift, an unstable judge score can distort which trace receives credit. The trained Reason RM provides a lower-variance and more task-calibrated signal, making it a better reward component for on-policy planner optimization than direct rubric judging.

\subsection{Training-Dynamics Diagnostics}
\label{app:training-dynamics-diagnostics}

The main text shows the full reasoning-length and uplift curves. Here we summarize the same logs with rolling-window statistics. The rolling mean window is 30 steps. The Base curve is obtained by re-running the Qwen2.5-7B base planner on the same training problem sequence, so it reflects problem-sequence variation rather than parameter updates. Exec-only and \method{} are on-policy training samples.

\begin{table}[h]
\centering
\small
\setlength{\tabcolsep}{6pt}
\renewcommand{\arraystretch}{1.08}
\caption{Rolling-window training dynamics. Token values are Qwen2.5-tokenizer counts. The last column is the rolling mean uplift coefficient used in the \method{} reward.}
\label{tab:app-rolling-windows}
\begin{tabular}{lrrrrr}
\toprule
Window & Base len. & Exec-only len. & \method{} len. & \method{} $-$ Exec-only & Uplift coeff. \\
\midrule
Steps 1--100 & 130.40 & 158.62 & 126.01 & $-32.60$ & 0.0306 \\
Steps 251--350 & 128.01 & 131.99 & 142.18 & $+10.19$ & 0.0791 \\
Steps 501--600 & 129.08 & 148.87 & 131.64 & $-17.23$ & 0.0711 \\
Final step & 128.15 & 166.02 & 136.25 & $-29.77$ & 0.0820 \\
\bottomrule
\end{tabular}
\end{table}

The dynamics show that \method{} does not follow a monotonic length-increase path. In the early window, it is shorter than both Base and Exec-only while the uplift coefficient is already positive. In the middle window, length increases temporarily as the planner explores more detailed traces, and the uplift coefficient rises. By the late window and final step, the planner retains the positive uplift signal while returning to much shorter traces than Exec-only. This pattern is consistent with a refinement process: the reward initially permits exploration of useful details, then favors traces that keep the executor-relevant information without accumulating unnecessary explanation.

The final-step contrast is especially informative. Exec-only reaches a rolling length of $166.02$ tokens, while \method{} is $136.25$ tokens, nearly $30$ tokens shorter, despite achieving better downstream performance. Thus, the additional reward term is not functioning as a verbosity bonus. It acts as a filter that rewards details only when they are both rubric-supported and executor-useful.

\section{Qualitative Case Studies}
\label{app:qualitative-cases}

This appendix provides qualitative diagnostics for representative code and math examples. The goal is not to repeat the aggregate benchmark results, but to inspect how different training objectives change the information carried by the reasoning trace. Each case is summarized from evaluation artifacts and focuses on the decisive reasoning signal rather than reproducing full model completions. Across cases, the recurring pattern is that \method{} does not merely produce more text; it tends to preserve the specific constraint, invariant, unit relation, or algebraic dependency that changes the frozen executor's final behavior.

\subsection{Code Case Studies}
\label{app:code-cases}

\begin{casebox}{HumanEval/64: \texttt{vowels\_count}}
\small
\begin{tabularx}{\linewidth}{@{}lY@{}}
\toprule
Failure type & \badtag{edge-case handling} \quad \badtag{duplicate counting} \\
\midrule
Task signal & Count ordinary vowels, with a special rule for terminal \texttt{y}. \\
Base behavior & The reasoning covers ordinary vowels but tends to omit the terminal-\texttt{y} rule. This gives the executor an underspecified plan. \\
Exec-only behavior & The reasoning expands the plan, but can place \texttt{y} inside the vowel set and also describe a separate terminal-\texttt{y} adjustment. This creates a duplicate-counting path. \\
\method{} behavior & The reasoning keeps the ordinary vowel set as \texttt{aeiou} and handles only word-final \texttt{y} as a separate boundary case. \\
\bottomrule
\end{tabularx}

\vspace{0.4em}
\textbf{Analysis.}
This case illustrates why execution-only feedback can be too coarse. A longer trace may mention the missing edge case, but if it encodes the edge case in the wrong form, the executor can implement a systematically wrong counting rule. \method{} improves the trace by separating the default rule from the exceptional rule. The key gain is therefore not verbosity, but a cleaner partition of cases: ordinary vowels are counted uniformly, and terminal \texttt{y} is treated exactly once. This is the kind of local constraint that a rubric-only score may not fully identify unless it is grounded in whether the executor's behavior actually changes.
\end{casebox}

\begin{casebox}{HumanEvalPlus/55: \texttt{fib}}
\small
\begin{tabularx}{\linewidth}{@{}lY@{}}
\toprule
Failure type & \badtag{infeasible complexity} \quad \badtag{stronger-test timeout} \\
\midrule
Task signal & Compute Fibonacci values in a way that remains feasible under stronger tests. \\
Base behavior & The reasoning can select the mathematically natural recursive definition, which is correct as a recurrence but inefficient as an implementation plan. \\
Exec-only behavior & The trace remains compatible with naive recursion and therefore may still lead the executor to redundant exponential computation. \\
\method{} behavior & The trace specifies an iterative plan, maintaining the two most recent Fibonacci values and updating them in a loop. \\
\bottomrule
\end{tabularx}

\vspace{0.4em}
\textbf{Analysis.}
This case separates mathematical correctness from executor-useful planning. The recursive definition of Fibonacci is semantically correct, so a surface-level reasoning judge may view it as plausible. However, for code generation, the reasoning trace must also communicate an executable route that satisfies the test regime. \method{} shifts the trace from a definition-level explanation to an implementation-level invariant: keep two previous values and iterate. This supports the paper's central claim that reasoning quality should be evaluated as an intermediate artifact consumed by an executor, not only as a fluent explanation of the task.
\end{casebox}

\begin{casebox}{MBPP/16: \texttt{text\_lowercase\_underscore}}
\small
\begin{tabularx}{\linewidth}{@{}lY@{}}
\toprule
Failure type & \badtag{missing full-string constraint} \quad \badtag{regex anchoring} \\
\midrule
Task signal & Check whether a string consists of lowercase letters, one underscore, and lowercase letters. \\
Base behavior & The trace tends to describe a pattern similar to \texttt{[a-z]+\_[a-z]+}, which can match a substring instead of the whole input. \\
Exec-only behavior & The reasoning remains close to the same unanchored pattern and does not reliably force full-string matching. \\
\method{} behavior & The trace explicitly specifies the anchored pattern \texttt{\textasciicircum{}[a-z]+\_[a-z]+\$}. \\
\bottomrule
\end{tabularx}

\vspace{0.4em}
\textbf{Analysis.}
The decisive difference is a small symbolic constraint. Adding anchors does not make the reasoning much longer, but it changes the executor's implementation path from substring search to full-string validation. This is a useful example of executor-consumable reasoning: a compact trace carries exactly the information needed to avoid a common false-positive pattern. It also explains why length alone is a poor proxy for reasoning quality; the important unit of supervision is whether the trace contains the constraint that affects execution.
\end{casebox}

\begin{casebox}{HumanEval/95: \texttt{check\_dict\_case}}
\small
\begin{tabularx}{\linewidth}{@{}lY@{}}
\toprule
Failure type & \badtag{missing type guard} \quad \badtag{invalid-key behavior} \\
\midrule
Task signal & Determine whether all dictionary keys are consistently lowercase or uppercase, while handling non-string keys correctly. \\
Base behavior & The trace focuses on checking uppercase/lowercase consistency but can omit the requirement that all keys must first be strings. \\
Exec-only behavior & The trace may become more compact, but the decisive improvement depends on whether it explicitly includes the non-string guard. \\
\method{} behavior & The trace states that every key must be a string before applying lower/upper consistency checks. \\
\bottomrule
\end{tabularx}

\vspace{0.4em}
\textbf{Analysis.}
This case demonstrates the value of type-level constraints. Without the string-key guard, the executor may write a solution that works for ordinary dictionaries but fails on adversarial or edge inputs. \method{} improves the trace by making the guard a prerequisite rather than an afterthought. The reasoning is not valuable because it explains the task at length; it is valuable because it orders the checks correctly: validate key types first, then check case consistency.
\end{casebox}

\subsection{Math Case Studies}
\label{app:math-cases}

\begin{casebox}{GSM8K \texttt{gsm8k\_test\_0038}: weekly running hours}
\small
\begin{tabularx}{\linewidth}{@{}lY@{}}
\toprule
Failure type & \badtag{unit aggregation} \quad \badtag{rate denominator error} \\
\midrule
Task signal & Aggregate the total weekly running time before dividing total distance by time. \\
Base behavior & The executor predicts a value corresponding to an incorrect denominator, indicating that the weekly hours are not aggregated correctly. \\
Exec-only behavior & The reasoning still fails to supply the correct total-time structure and can lead to another wrong denominator. \\
\method{} behavior & The trace computes the weekly total as $3 + 1.5 + 1.5 = 6$ hours and then evaluates $60/6=10$. \\
\bottomrule
\end{tabularx}

\vspace{0.4em}
\textbf{Analysis.}
The important reasoning signal is the construction of the denominator. The problem is not solved by recognizing a generic distance-rate-time template; the executor needs the correct unit aggregation before applying the division. \method{} provides this intermediate quantity explicitly, which makes the final operation unambiguous. This is a typical executor-grounding benefit: the trace does not merely state a formula, but supplies the quantity that the executor would otherwise infer incorrectly.
\end{casebox}

\begin{casebox}{GSM-Hard \texttt{gsm\_hard\_0065}: total pets}
\small
\begin{tabularx}{\linewidth}{@{}lY@{}}
\toprule
Failure type & \badtag{dropped entity} \quad \badtag{large-number stress} \\
\midrule
Task signal & Compute the number of pets for Cindy, Marcia, and Jan, then sum all three quantities. \\
Base behavior & The trace can omit one participant from the final aggregation, producing a total that excludes Cindy. \\
Exec-only behavior & The same dropped-entity pattern remains possible under execution-only training. \\
\method{} behavior & The trace separately computes Cindy, Marcia, and Jan, and then sums the three values to obtain the final total. \\
\bottomrule
\end{tabularx}

\vspace{0.4em}
\textbf{Analysis.}
This case shows a failure that final-answer reward alone may not localize: the arithmetic can be internally consistent after one entity has been dropped. The trace appears procedural, but it no longer supports the full problem. \method{} improves the reasoning by preserving the entity inventory throughout the derivation. This is exactly the kind of support relation targeted by the Reason RM rubric: a trace should justify the answer with all required quantities, not merely perform plausible arithmetic on a subset of them.
\end{casebox}

\begin{casebox}{GSM-Hard \texttt{gsm\_hard\_0225}: bakery afternoon sales}
\small
\begin{tabularx}{\linewidth}{@{}lY@{}}
\toprule
Failure type & \badtag{dropped condition} \quad \badtag{intermediate-state tracking} \\
\midrule
Task signal & Track morning sales, remaining loaves, and the specified afternoon share. \\
Base behavior & The reasoning can jump to an answer after using only part of the sales process. \\
Exec-only behavior & The executor may still confuse the leftover quantity with the afternoon-sold quantity. \\
\method{} behavior & The trace maintains the intermediate state: first compute what remains after morning sales, then apply the afternoon fraction to the remaining loaves. \\
\bottomrule
\end{tabularx}

\vspace{0.4em}
\textbf{Analysis.}
This example highlights why step support matters. The final answer depends on applying an operation to the correct intermediate state, not merely on choosing the right operation somewhere in the solution. \method{} makes the state transition explicit. This helps the executor avoid a common shortcut: treating the remaining amount and the sold amount as interchangeable. The trace is therefore useful because it constrains the order and target of operations.
\end{casebox}

\begin{casebox}{SVAMP \texttt{svamp\_test\_0034}: Jake's balloons}
\small
\begin{tabularx}{\linewidth}{@{}lY@{}}
\toprule
Failure type & \badtag{target-entity scope} \quad \badtag{incomplete accumulation} \\
\midrule
Task signal & Combine Jake's original balloons with the additional balloons bought later. \\
Base behavior & The trace may fail to preserve the accumulation target across the story. \\
Exec-only behavior & The executor answers using only the newly bought balloons, focusing on the local event rather than Jake's total. \\
\method{} behavior & The trace keeps Jake as the target entity and adds the original $6$ balloons to the $3$ bought later, yielding $9$. \\
\bottomrule
\end{tabularx}

\vspace{0.4em}
\textbf{Analysis.}
This is a scope-tracking failure. The local action ``bought 3 balloons'' is salient, but the question asks for Jake's total. \method{} improves the trace by anchoring the computation to the target entity and maintaining the cumulative quantity. The example is useful because the corrected reasoning is simple and compact; the gain comes from selecting the right referent and preserving it across the calculation.
\end{casebox}

\begin{casebox}{MATH500 \texttt{math500\_test\_0058}: evaluating $99^2+99+1$}
\small
\begin{tabularx}{\linewidth}{@{}lY@{}}
\toprule
Failure type & \badtag{arithmetic shortcut} \quad \badtag{incorrect expansion} \\
\midrule
Task signal & Evaluate the expression exactly without rounding or approximation. \\
Base behavior & The executor rounds or approximates the square, leading to an answer near $10000$. \\
Exec-only behavior & The trace still permits an arithmetic slip, producing $9902$. \\
\method{} behavior & The trace expands $(100-1)^2=9801$, then computes $9801+99+1=9901$. \\
\bottomrule
\end{tabularx}

\vspace{0.4em}
\textbf{Analysis.}
This case is small but diagnostic. The executor does not need a long derivation; it needs the exact decomposition that prevents approximation. \method{} supplies a stable arithmetic route by rewriting $99^2$ around $100$. The improvement illustrates a general math pattern: useful reasoning often consists of a compact intermediate representation that removes ambiguity from the final computation.
\end{casebox}

\begin{casebox}{MATH500 \texttt{math500\_test\_0131}: polynomial symmetry}
\small
\begin{tabularx}{\linewidth}{@{}lY@{}}
\toprule
Failure type & \badtag{false symmetry} \quad \badtag{lost odd term} \\
\midrule
Task signal & Account for both even and odd components of the polynomial. \\
Base behavior & The reasoning can treat the polynomial as if it were even, leading to a symmetric conclusion. \\
Exec-only behavior & The same shortcut remains: the executor predicts the value implied by false symmetry. \\
\method{} behavior & The trace separates the odd linear term from the even components and derives $f(3)=8$. \\
\bottomrule
\end{tabularx}

\vspace{0.4em}
\textbf{Analysis.}
The key error is structural rather than numeric. Treating the polynomial as even discards the contribution of the odd term, so subsequent arithmetic can be consistent but based on the wrong invariant. \method{} prevents this by making the asymmetry explicit. This is a strong example for executor-grounded supervision: a fluent explanation of symmetry is actively harmful if the executor consumes it as a plan, whereas a short trace that identifies the non-even component changes the solution path.
\end{casebox}

\subsection{Cross-Case Patterns}
\label{app:qualitative-patterns}

Table~\ref{tab:app-case-patterns} summarizes the mechanism-level patterns across the case studies. These patterns are more informative than raw length or fluency because they identify what the executor needs from the trace.

\begin{table}[h]
\centering
\small
\setlength{\tabcolsep}{5pt}
\renewcommand{\arraystretch}{1.08}
\caption{Mechanism-level patterns observed in qualitative cases.}
\label{tab:app-case-patterns}
\begin{tabularx}{\linewidth}{lY Y}
\toprule
Pattern & Typical failure under weaker traces & \method{} trace behavior \\
\midrule
Boundary condition & The trace mentions the common case but misses or misrepresents the exception. & Keeps the default rule and exception as separate executor-actionable cases. \\
Type or format guard & The trace describes the main operation but omits validation constraints. & Places validation before the main operation, guiding the executor's control flow. \\
Complexity constraint & The trace gives a mathematically valid but computationally infeasible plan. & Converts the idea into an efficient implementation invariant. \\
Entity or unit tracking & The trace uses the right operation on the wrong quantity or subset. & Preserves the target entity, unit, or intermediate state until the final operation. \\
Algebraic structure & The trace applies a tempting shortcut that loses an asymmetric term or exact value. & Names the structural dependency that determines the final answer. \\
\bottomrule
\end{tabularx}
\end{table}

The case studies support three qualitative conclusions. First, \method{} traces are often valuable because they remove a specific ambiguity that the executor would otherwise resolve incorrectly. Second, the improvements are local and actionable: a regex anchor, a string-key guard, an iterative invariant, a unit denominator, or an odd polynomial term. Third, execution-only training can improve final accuracy but does not explicitly distinguish useful detail from misleading or redundant detail. The executor-grounded reward addresses this gap by crediting reasoning that is both rubric-supported and empirically useful to the frozen executor.

\section{Theoretical Analysis}
\label{app:theory}

This appendix analyzes the reward used by \method{} in the fixed planner-executor setting. The goal is not to prove global convergence of GRPO or to claim that the training-time stochastic executor distribution exactly equals the deterministic evaluation distribution. Instead, we analyze the credit-assignment structure induced by Eq.~\ref{eq:reward}. The analysis shows that \method{} can be interpreted as a quality-weighted conditional executor-utility objective: the verifier term anchors training to task success, while the Reason RM term modulates the marginal utility of the reasoning trace for the frozen executor.

\subsection{Formal Setting}
\label{app:theory-setting}

Let $P\sim\Dist$ be a problem and let a planner policy sample a reasoning trace $R\sim\pi_{\theta}(\cdot\mid P)$. The executor is frozen. For a decoding temperature $\tau$, denote by
\begin{equation}
A_{\tau}(P,R;\xi)
\end{equation}
the final artifact produced by the executor from problem $P$ and reasoning trace $R$ under executor randomness $\xi$. The no-reasoning input is denoted by $\rempty$. The verifier is deterministic given the artifact:
\begin{equation}
V(A)\in\{0,1\}.
\end{equation}
For code, $V$ is executable-test success; for math, $V$ is answer matching. Define the executor success probability under the training-time executor distribution as
\begin{equation}
\qtr(P,R)
=
\Prob_{\xi}\!\left[V(A_{\tau_{\mathrm{tr}}}(P,R;\xi))=1\right],
\label{eq:app-qtr}
\end{equation}
where $\tau_{\mathrm{tr}}=0.5$ in our training-time rollout and uplift estimation. The no-reasoning success probability is
\begin{equation}
\qtr(P,\rempty)
=
\Prob_{\xi}\!\left[V(A_{\tau_{\mathrm{tr}}}(P,\rempty;\xi))=1\right].
\label{eq:app-qtr-empty}
\end{equation}
The training-time executor uplift of a trace is
\begin{equation}
\utr(P,R)
=
\qtr(P,R)-\qtr(P,\rempty).
\label{eq:app-utr}
\end{equation}
Since both terms are probabilities, $\utr(P,R)\in[-1,1]$.

Let
\begin{equation}
\mrm(P,R)=\mathrm{RM}_{\phi}(P,R)\in[0,1]
\label{eq:app-rm-score}
\end{equation}
be the Reason RM score. The RM is fixed during planner optimization and receives only the problem and reasoning trace, not executor outputs or verifier outcomes.

\begin{appassumption}[Fixed-executor reward analysis]
\label{ass:fixed-executor}
The following analysis assumes:
\begin{enumerate}
    \item the executor parameters and executor prompt are fixed during planner training;
    \item with-reasoning and no-reasoning executor branches use matched training-time decoding settings;
    \item verifier outputs are binary and deterministic given the final artifact;
    \item the Reason RM score lies in $[0,1]$ and is fixed during each planner update;
    \item the reward analyzed is Eq.~\ref{eq:reward}:
    \begin{equation}
    \mathcal{R}(P,R)
    =
    0.5\,\Rexec(P,R)
    +
    0.5\,\mathrm{RM}_{\phi}(P,R)\,\utr(P,R).
    \label{eq:app-tracelift-reward-assumption}
    \end{equation}
\end{enumerate}
The analysis is therefore about the training-time reward distribution. Final evaluation uses temperature $0$ greedy decoding and is assessed empirically in the main experiments.
\end{appassumption}

\subsection{Executor Uplift as a Conditional Treatment Effect}
\label{app:uplift-cate}

The reasoning trace can be viewed as a treatment applied to a frozen executor. For the same problem $P$ and the same executor, we compare the potential outcome with trace $R$ against the potential outcome with the reasoning field omitted. Under matched decoding settings, the only intervention is whether the executor receives the trace.

\begin{appdefinition}[Executor treatment effect]
\label{def:executor-treatment-effect}
For a fixed problem $P$ and trace $R$, the training-time conditional executor treatment effect is
\begin{equation}
\utr(P,R)
=
\qtr(P,R)-\qtr(P,\rempty).
\label{eq:app-treatment-effect}
\end{equation}
A positive value means that the trace improves the frozen executor's success probability relative to the no-reasoning prompt. A negative value means that the trace harms the executor.
\end{appdefinition}

During training, \method{} estimates this quantity by repeated executor samples:
\begin{equation}
\widehat{u}_K(P,R)
=
\frac{1}{K}\sum_{k=1}^{K}Y^{R}_{k}
-
\frac{1}{K}\sum_{k=1}^{K}Y^{0}_{k},
\label{eq:app-uplift-estimator}
\end{equation}
where
\begin{equation}
\begin{aligned}
Y^{R}_{k}
&=
V(A_{\tau_{\mathrm{tr}}}(P,R;\xi^{R}_{k})),\\
Y^{0}_{k}
&=
V(A_{\tau_{\mathrm{tr}}}(P,\rempty;\xi^{0}_{k})).
\end{aligned}
\label{eq:app-uplift-samples}
\end{equation}

\begin{appproposition}[Unbiasedness and variance of the uplift estimator]
\label{prop:uplift-estimator}
Assume that $\{Y^{R}_{k}\}_{k=1}^{K}$ are i.i.d. Bernoulli with mean $\qtr(P,R)$ and $\{Y^{0}_{k}\}_{k=1}^{K}$ are i.i.d. Bernoulli with mean $\qtr(P,\rempty)$. If the two branches are sampled independently, then
\begin{equation}
\Expect[\widehat{u}_K(P,R)]
=
\utr(P,R),
\label{eq:app-uplift-unbiased}
\end{equation}
and
\begin{equation}
\Var[\widehat{u}_K(P,R)]
=
\frac{
\qtr(P,R)(1-\qtr(P,R))
+
\qtr(P,\rempty)(1-\qtr(P,\rempty))
}{K}
\le
\frac{1}{2K}.
\label{eq:app-uplift-variance}
\end{equation}
\end{appproposition}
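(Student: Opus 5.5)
The plan is to split $\widehat{u}_K(P,R)$ in Eq.~\ref{eq:app-uplift-estimator} into its two sample means, $\bar Y^{R}=\frac1K\sum_k Y^{R}_k$ and $\bar Y^{0}=\frac1K\sum_k Y^{0}_k$, and treat each with elementary Bernoulli moment identities. Throughout, write $q=\qtr(P,R)$ and $q_0=\qtr(P,\rempty)$.

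\textbf{Unbiasedness.} By linearity of expectation and $\Expect[Y^{R}_k]=q$, $\Expect[Y^{0}_k]=q_0$, we get $\Expect[\bar Y^{R}]=q$ and $\Expect[\bar Y^{0}]=q_0$. Hence $\Expect[\widehat{u}_K]=q-q_0=\utr(P,R)$ by Definition~\ref{def:executor-treatment-effect}. This step needs neither independence nor identical distribution; only the marginal means matter.

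\textbf{Variance.} First, independence between the two branches kills the cross term:
\[
\Var[\bar Y^{R}-\bar Y^{0}]=\Var[\bar Y^{R}]+\Var[\bar Y^{0}].
\]
Second, within each branch the samples are i.i.d., so $\Var[\bar Y^{R}]=\frac{1}{K^2}\sum_k\Var[Y^{R}_k]=\frac{q(1-q)}{K}$, and likewise $\Var[\bar Y^{0}]=\frac{q_0(1-q_0)}{K}$. Summing these gives the displayed equality. For the bound, use $x(1-x)\le \tfrac14$ on $[0,1]$, with the maximum at $x=\tfrac12$. Each numerator term is therefore at most $\tfrac14$, so the variance is at most $\frac{1}{2K}$.

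\textbf{Main obstacle.} The only delicate point is being explicit about which assumption each step uses.
\begin{itemize}
\item Cross-branch independence is needed only to drop $-2\,\mathrm{Cov}(\bar Y^{R},\bar Y^{0})$.
\item Within-branch i.i.d.\ sampling is needed only to reduce $\Var[\bar Y^{R}]$ to $q(1-q)/K$.
\item Assumption~\ref{ass:fixed-executor} (fixed executor, matched decoding) guarantees that all samples in a branch share the same success probability, which is what makes them identically distributed.
\end{itemize}
It is also worth noting when the bound is tight: $\frac{1}{2K}$ is attained exactly when $q=q_0=\tfrac12$.
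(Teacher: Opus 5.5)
Your proof is correct and follows the paper's argument essentially step for step: linearity of expectation for unbiasedness, cross-branch independence to split the variance into two terms, within-branch i.i.d.\ sampling to get $q(1-q)/K$ and $q_0(1-q_0)/K$, and $p(1-p)\le \tfrac14$ for the $\frac{1}{2K}$ bound. Your accounting of which assumption each step uses and your tightness remark at $q=q_0=\tfrac12$ are accurate additions. Note only that identical distribution is already a hypothesis of the proposition, so you do not need to derive it from Assumption~\ref{ass:fixed-executor}.
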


\begin{proof}
By linearity of expectation,
\begin{equation}
\begin{aligned}
\Expect[\widehat{u}_K(P,R)]
&=
\frac{1}{K}\sum_{k=1}^{K}\Expect[Y^{R}_{k}]
-
\frac{1}{K}\sum_{k=1}^{K}\Expect[Y^{0}_{k}]\\
&=
\qtr(P,R)-\qtr(P,\rempty)\\
&=
\utr(P,R).
\end{aligned}
\end{equation}
For independent branches,
\begin{equation}
\Var[\widehat{u}_K(P,R)]
=
\Var\!\left[\frac{1}{K}\sum_{k=1}^{K}Y^{R}_{k}\right]
+
\Var\!\left[\frac{1}{K}\sum_{k=1}^{K}Y^{0}_{k}\right].
\end{equation}
Since the samples are i.i.d.,
\begin{equation}
\Var\!\left[\frac{1}{K}\sum_{k=1}^{K}Y^{R}_{k}\right]
=
\frac{1}{K}\qtr(P,R)(1-\qtr(P,R)),
\end{equation}
and similarly for the no-reasoning branch. Therefore,
\begin{equation}
\Var[\widehat{u}_K(P,R)]
=
\frac{
\qtr(P,R)(1-\qtr(P,R))
+
\qtr(P,\rempty)(1-\qtr(P,\rempty))
}{K}.
\end{equation}
For any Bernoulli mean $p\in[0,1]$, $p(1-p)\le 1/4$, so
\begin{equation}
\Var[\widehat{u}_K(P,R)]
\le
\frac{1/4+1/4}{K}
=
\frac{1}{2K}.
\end{equation}
\end{proof}

\begin{appremark}[Effect of clipping]
Because both empirical success rates lie in $[0,1]$, their difference lies in $[-1,1]$. Thus, for binary verifiers and success-rate estimates, the clipping in Eq.~\ref{eq:uplift} is inactive mathematically. It is an implementation safeguard and does not change the estimator under the assumptions above.
\end{appremark}

\subsection{\method{} as a Quality-Weighted Uplift Objective}
\label{app:quality-weighted-uplift}

Let the single execution score used in the reward be
\begin{equation}
X(P,R)=V(A_{\tau_{\mathrm{tr}}}(P,R;\xi)),
\label{eq:app-single-exec-score}
\end{equation}
so that
\begin{equation}
\Expect[X(P,R)\mid P,R]=\qtr(P,R).
\label{eq:app-single-exec-expectation}
\end{equation}
The population version of the \method{} reward is obtained by taking expectation over executor randomness.

\begin{appproposition}[Expected \method{} reward]
\label{prop:expected-tracelift-reward}
Under Assumption~\ref{ass:fixed-executor},
\begin{equation}
\Expect[\mathcal{R}(P,R)\mid P,R]
=
0.5\,\qtr(P,\rempty)
+
0.5\,(1+\mrm(P,R))\,\utr(P,R).
\label{eq:app-expected-tracelift-reward}
\end{equation}
\end{appproposition}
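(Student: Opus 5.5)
The argument is linearity of conditional expectation applied to the reward in Eq.~\ref{eq:app-tracelift-reward-assumption}, followed by the decomposition $\qtr(P,R)=\qtr(P,\rempty)+\utr(P,R)$.

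\textbf{Step 1: condition and take expectations term by term.} Conditioning on $(P,R)$, the Reason RM score $\mrm(P,R)$ is deterministic, since it is fixed during the update and sees only $(P,R)$ (item 4 of Assumption~\ref{ass:fixed-executor}). The only randomness is the executor noise $\xi$ in the verifier term. Identifying $\Rexec(P,R)$ with the single execution score $X(P,R)$ of Eq.~\ref{eq:app-single-exec-score}, Eq.~\ref{eq:app-single-exec-expectation} gives $\Expect[\Rexec(P,R)\mid P,R]=\qtr(P,R)$.

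\textbf{Step 2: handle the RM--uplift term.} In the analyzed reward, $\utr(P,R)$ is already a population quantity given $(P,R)$, so $\Expect[\mrm\,\utr\mid P,R]=\mrm(P,R)\,\utr(P,R)$. If one instead wants the version with the empirical estimator $\widehat{u}_K$, I would invoke Proposition~\ref{prop:uplift-estimator}. Because $\mrm$ is constant given $(P,R)$, unbiasedness gives $\Expect[\mrm\,\widehat{u}_K\mid P,R]=\mrm\,\utr$. By the Remark on clipping, the clip in Eq.~\ref{eq:uplift} is inactive and adds no bias.

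\textbf{Step 3: regroup.} Steps 1 and 2 give
\[
\Expect[\mathcal{R}\mid P,R]=0.5\,\qtr(P,R)+0.5\,\mrm\,\utr .
\]
Substituting $\qtr(P,R)=\qtr(P,\rempty)+\utr(P,R)$ from Eq.~\ref{eq:app-utr} and collecting the $\utr$ terms yields
\[
0.5\,\qtr(P,\rempty)+0.5\,(1+\mrm)\,\utr ,
\]
which is Eq.~\ref{eq:app-expected-tracelift-reward}.

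\textbf{Main subtlety.} The algebra is routine; the care is all in justifying the conditional expectations. First, $\Rexec$ must be a single draw at the training temperature $\tau_{\mathrm{tr}}$ under the same matched settings as the uplift branches (item 2). Only then does its mean equal the same $\qtr(P,R)$ that appears in $\utr$, which is what allows the regrouping in Step 3. Second, in the estimator version, the product $\mrm\,\widehat{u}_K$ has the right mean only because $\mrm$ is measurable with respect to $(P,R)$ alone; no independence between the verifier draw and the uplift samples is needed. I would state both points explicitly.
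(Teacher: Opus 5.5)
Your proposal is correct and follows the paper's proof. The paper's proof likewise writes the reward as $0.5\,X(P,R)+0.5\,\mrm(P,R)\,\widehat{u}_K(P,R)$, takes conditional expectations using $\Expect[X\mid P,R]=\qtr(P,R)$ and the unbiasedness in Proposition~\ref{prop:uplift-estimator}, then substitutes $\qtr(P,R)=\qtr(P,\rempty)+\utr(P,R)$ and regroups. Your remark that this step needs no independence between the verifier draw and the uplift samples, only that $\mrm$ is a function of $(P,R)$, is correct; the paper invokes independence only later, for the variance bound.
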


\begin{proof}
By Eq.~\ref{eq:reward},
\begin{equation}
\mathcal{R}(P,R)
=
0.5\,X(P,R)
+
0.5\,\mrm(P,R)\,\widehat{u}_K(P,R).
\end{equation}
Taking conditional expectation and using Proposition~\ref{prop:uplift-estimator},
\begin{equation}
\Expect[\mathcal{R}(P,R)\mid P,R]
=
0.5\,\qtr(P,R)
+
0.5\,\mrm(P,R)\,\utr(P,R).
\end{equation}
Since
\begin{equation}
\qtr(P,R)
=
\qtr(P,\rempty)+\utr(P,R),
\end{equation}
we substitute:
\begin{equation}
\begin{aligned}
\Expect[\mathcal{R}(P,R)\mid P,R]
&=
0.5\,\big(\qtr(P,\rempty)+\utr(P,R)\big)
+
0.5\,\mrm(P,R)\,\utr(P,R)\\
&=
0.5\,\qtr(P,\rempty)
+
0.5\,(1+\mrm(P,R))\,\utr(P,R).
\end{aligned}
\end{equation}
\end{proof}

This identity is the central theoretical interpretation of \method{}. The no-reasoning executor success $\qtr(P,\rempty)$ is a problem-level constant. The trace-dependent part is
\begin{equation}
0.5\,(1+\mrm(P,R))\,\utr(P,R).
\label{eq:app-trace-dependent-score}
\end{equation}
Thus, \method{} does not add an arbitrary preference bonus. It modulates the marginal executor utility of the trace by a bounded reasoning-quality score.

\subsection{Effect Under GRPO Group Normalization}
\label{app:grpo-normalization-theory}

GRPO normalizes rewards within a group of traces sampled for the same problem. Therefore, problem-level constants do not affect the relative advantage.

\begin{applemma}[Problem-level constants cancel under group normalization]
\label{lem:constant-cancel}
For a fixed problem $P$, let a group of rewards be $r_i=c(P)+s_i$ for $i=1,\ldots,G$, where $c(P)$ is independent of $i$. Let
\begin{equation}
\hat A_i(r)
=
\frac{r_i-\mathrm{mean}_j(r_j)}
{\mathrm{std}_j(r_j)+\delta}.
\label{eq:app-group-advantage}
\end{equation}
Then
\begin{equation}
\hat A_i(r)
=
\hat A_i(s).
\label{eq:app-constant-cancel-claim}
\end{equation}
\end{applemma}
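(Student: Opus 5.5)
The plan is a direct computation: we show that the numerator and the denominator of $\hat A_i$ are each unchanged when every reward in the group is shifted by the same constant $c(P)$. Throughout, $P$ is fixed, so $c(P)$ is a scalar common to all $i$. We read $\mathrm{mean}_j$ and $\mathrm{std}_j$ as the empirical mean and standard deviation over the $G$ group members. The argument works identically for the population ($1/G$) and the sample ($1/(G-1)$) normalization, because both are built from the centered values.

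\textbf{Step 1 (mean shifts).} By linearity of the empirical mean,
\[
\mathrm{mean}_j(r_j)=\frac{1}{G}\sum_{j=1}^{G}\big(c(P)+s_j\big)=c(P)+\mathrm{mean}_j(s_j).
\]
Consequently the centered reward satisfies $r_i-\mathrm{mean}_j(r_j)=s_i-\mathrm{mean}_j(s_j)$ for every $i$, so the numerator of the advantage is unchanged.

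\textbf{Step 2 (standard deviation is shift-invariant).} The empirical standard deviation depends on the group only through the centered values $r_j-\mathrm{mean}_k(r_k)$. Step~1 shows these centered values coincide with $s_j-\mathrm{mean}_k(s_k)$. Hence $\mathrm{std}_j(r_j)=\mathrm{std}_j(s_j)$, and the denominator $\mathrm{std}_j(r_j)+\delta$ is also unchanged. Here $\delta$ does not depend on the rewards.

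\textbf{Step 3 (conclude).} Substituting Steps~1 and~2 into the definition of the group advantage yields $\hat A_i(r)=\hat A_i(s)$.

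The only point needing care is that the denominator is nonzero; since $\delta>0$ stabilizes it, the quotient is always defined. Apart from that there is no real obstacle. The intended use is the following. Taking $c(P)=0.5\,\qtr(P,\rempty)$ in Proposition~\ref{prop:expected-tracelift-reward} shows that only the trace-dependent term $0.5\,(1+\mrm(P,R))\,\utr(P,R)$ affects the GRPO advantages. This holds at the level of expected rewards, or whenever the baseline estimate is shared across the group.
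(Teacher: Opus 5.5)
Your proof is correct and follows essentially the same route as the paper's: the group mean shifts by exactly $c(P)$, so the centered numerator is unchanged, and the standard deviation is shift-invariant, so the denominator $\mathrm{std}_j(r_j)+\delta$ is unchanged. Your additional remarks, that the argument holds for either population or sample normalization and that $\delta>0$ keeps the quotient well defined, are harmless refinements the paper leaves implicit.
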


\begin{proof}
The group mean satisfies
\begin{equation}
\begin{aligned}
\mathrm{mean}_j(r_j)
&=
\mathrm{mean}_j(c(P)+s_j)\\
&=
c(P)+\mathrm{mean}_j(s_j).
\end{aligned}
\end{equation}
Thus,
\begin{equation}
\begin{aligned}
r_i-\mathrm{mean}_j(r_j)
&=
c(P)+s_i-\big(c(P)+\mathrm{mean}_j(s_j)\big)\\
&=
s_i-\mathrm{mean}_j(s_j).
\end{aligned}
\end{equation}
The standard deviation is unchanged by adding a constant:
\begin{equation}
\mathrm{std}_j(r_j)=\mathrm{std}_j(s_j).
\end{equation}
Substituting into the definition of $\hat A_i$ gives the result.
\end{proof}

Applying Lemma~\ref{lem:constant-cancel} to Proposition~\ref{prop:expected-tracelift-reward}, the term $0.5\,\qtr(P,\rempty)$ does not affect the relative GRPO advantage within the same problem. The expected trace-specific score is therefore equivalent, for group-relative credit assignment, to
\begin{equation}
S_{\mathrm{TL}}(P,R)
=
(1+\mrm(P,R))\,\utr(P,R),
\label{eq:app-tl-trace-score}
\end{equation}
up to the positive constant factor $0.5$.

\begin{appcorollary}[Quality-gated executor utility]
\label{cor:quality-gated-uplift}
For a fixed problem $P$, the expected \method{} advantage is driven by a quality-weighted executor utility term:
\begin{equation}
(1+\mrm(P,R))\,\utr(P,R).
\label{eq:app-quality-gated-term}
\end{equation}
Moreover,
\begin{equation}
\frac{\partial}{\partial \mrm}
\Expect[\mathcal{R}(P,R)\mid P,R]
=
0.5\,\utr(P,R).
\label{eq:app-rm-derivative}
\end{equation}
Therefore:
\begin{enumerate}
    \item if $\utr(P,R)>0$, a higher RM score increases the trace's expected reward;
    \item if $\utr(P,R)=0$, the RM score alone creates no trace-specific expected reward;
    \item if $\utr(P,R)<0$, a higher RM score decreases the trace's expected reward.
\end{enumerate}
\end{appcorollary}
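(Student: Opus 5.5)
The corollary follows directly from Proposition~\ref{prop:expected-tracelift-reward} combined with Lemma~\ref{lem:constant-cancel}. I will argue in three steps.

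\textbf{Step 1: the quality-weighted term.} Start from the identity in Eq.~\ref{eq:app-expected-tracelift-reward},
\[
\Expect[\mathcal{R}(P,R)\mid P,R]=0.5\,\qtr(P,\rempty)+0.5\,(1+\mrm(P,R))\,\utr(P,R).
\]
For a fixed problem $P$, the first summand does not depend on the trace index inside a GRPO group. So set $c(P)=0.5\,\qtr(P,\rempty)$ and $s_i=0.5\,(1+\mrm(P,R_i))\,\utr(P,R_i)$.

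Lemma~\ref{lem:constant-cancel} then shows that the group-normalized advantage of the expected rewards equals that of the $s_i$. Group normalization also removes the positive factor $0.5$ up to the stabilizer $\delta$. Alternatively, one can simply state that the ordering and centered values are determined by the $s_i$. Either way, the advantage is driven by $(1+\mrm)\utr$, which is the first claim.

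I should be explicit that this concerns expected rewards, that is, population-level credit. The realized rewards contain noise from $\widehat u_K$ and $X$. The corollary is stated for the expected advantage, so I will phrase it that way.

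\textbf{Step 2: the derivative.} I treat $\mrm$ as a free variable in $[0,1]$ while holding $P$, $R$, and hence $\qtr(P,R)$ and $\qtr(P,\rempty)$ fixed. This is legitimate because, under Assumption~\ref{ass:fixed-executor}, the RM score enters the expected reward only through the product with $\utr$. The expression is affine in $\mrm$, so differentiating gives $0.5\,\utr(P,R)$.

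\textbf{Step 3: the three cases.} The sign of the derivative equals the sign of $\utr$. Since the expected reward is affine in $\mrm$, the effect holds globally, not just infinitesimally: raising $\mrm$ from $m$ to $m'>m$ changes the expected reward by exactly $0.5\,(m'-m)\,\utr$. This gives the three cases directly.

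For the case $\utr=0$, I will note that the trace-dependent term vanishes entirely. The RM therefore contributes nothing to the expected reward for that trace.

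\textbf{Main obstacle.} The only real subtlety is interpretive. The phrase ``partial derivative with respect to $\mrm$'' is meaningful only because $\mrm$ is a fixed function of $(P,R)$ that can be varied independently of $\utr$ in this reward parametrization. I will make that explicit rather than differentiate through $R$.
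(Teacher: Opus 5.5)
Your proposal is correct and follows essentially the same route as the paper: the paper obtains the first claim by applying Lemma~\ref{lem:constant-cancel} to Proposition~\ref{prop:expected-tracelift-reward}, which drops the problem-level constant $0.5\,\qtr(P,\rempty)$ and leaves $(1+\mrm)\,\utr$ up to the factor $0.5$; it then differentiates $0.5\,\qtr(P,\rempty)+0.5\,(1+\mrm)\,\utr$ in $\mrm$ with $P,R$ held fixed and reads off the three cases from the sign of $\utr$. Your observation that affine dependence makes the effect global, with a change of exactly $0.5\,(m'-m)\,\utr$, is a small but valid sharpening of the paper's derivative-based statement.
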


\begin{proof}
The derivative follows directly from Proposition~\ref{prop:expected-tracelift-reward}:
\begin{equation}
\Expect[\mathcal{R}(P,R)\mid P,R]
=
0.5\,\qtr(P,\rempty)
+
0.5\,(1+\mrm(P,R))\,\utr(P,R).
\end{equation}
Holding $P$ and $R$ fixed except for the RM score,
\begin{equation}
\frac{\partial}{\partial \mrm}
\Expect[\mathcal{R}(P,R)\mid P,R]
=
0.5\,\utr(P,R).
\end{equation}
The three cases follow from the sign of $\utr(P,R)$.
\end{proof}

This corollary formalizes the intended behavior. A fluent trace with no measured effect on the executor does not receive extra trace-specific credit from the RM score. A harmful trace receives negative uplift, so multiplying by the RM score does not hide the harm; it strengthens the penalty for traces that look high-quality but actually mislead the executor.

\subsection{Pairwise Credit Assignment}
\label{app:pairwise-credit}

Consider two candidate traces $R_a$ and $R_b$ for the same problem $P$. Let
\begin{equation}
u_a=\utr(P,R_a),
\qquad
u_b=\utr(P,R_b),
\label{eq:app-pairwise-u}
\end{equation}
and
\begin{equation}
m_a=\mrm(P,R_a),
\qquad
m_b=\mrm(P,R_b).
\label{eq:app-pairwise-m}
\end{equation}
By Proposition~\ref{prop:expected-tracelift-reward} and Lemma~\ref{lem:constant-cancel}, the expected trace-specific reward difference is
\begin{equation}
\Delta_{\mathrm{TL}}
=
0.5\left[(1+m_a)u_a-(1+m_b)u_b\right].
\label{eq:app-delta-tl}
\end{equation}
For execution-only training, the expected trace-specific difference is
\begin{equation}
\Delta_{\mathrm{exec}}
=
u_a-u_b,
\label{eq:app-delta-exec}
\end{equation}
because the no-reasoning baseline $\qtr(P,\rempty)$ is constant for the problem.

\begin{appproposition}[Preference under aligned quality and utility]
\label{prop:aligned-pairwise}
If
\begin{equation}
u_a\ge u_b\ge 0
\qquad\text{and}\qquad
m_a\ge m_b,
\label{eq:app-aligned-condition}
\end{equation}
then
\begin{equation}
\Delta_{\mathrm{TL}}\ge 0.
\label{eq:app-aligned-claim}
\end{equation}
If at least one of the inequalities is strict and either $u_b>0$ or $u_a>u_b$, then $\Delta_{\mathrm{TL}}>0$.
\end{appproposition}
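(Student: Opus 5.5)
The plan is a direct algebraic argument on the expression for $\Delta_{\mathrm{TL}}$ in Eq.~\ref{eq:app-delta-tl}; no further probabilistic input is needed. By Proposition~\ref{prop:expected-tracelift-reward} and Lemma~\ref{lem:constant-cancel}, the comparison reduces to the sign of
\begin{equation*}
D=(1+m_a)u_a-(1+m_b)u_b .
\end{equation*}
I will add and subtract $(1+m_a)u_b$, which splits $D$ into a utility-gap term and a quality-gap term:
\begin{equation*}
D=(1+m_a)(u_a-u_b)+(m_a-m_b)\,u_b .
\end{equation*}

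For the weak inequality, I treat each summand separately. Since $m_a\in[0,1]$, we have $1+m_a\ge 1>0$. Together with $u_a\ge u_b$, this makes the first term nonnegative. The second term is nonnegative because $m_a\ge m_b$ and $u_b\ge 0$. Hence $D\ge 0$, and therefore $\Delta_{\mathrm{TL}}=0.5\,D\ge 0$.

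For the strict claim, I split on whether the utility inequality is strict. If $u_a>u_b$, the first term is at least $u_a-u_b>0$ and the second is nonnegative, so $D>0$. Otherwise $u_a=u_b$, and the hypothesis ``either $u_b>0$ or $u_a>u_b$'' then forces $u_b>0$. Because some inequality must be strict and $u_a=u_b$, the strict one must be $m_a>m_b$. The first term vanishes and the second equals $(m_a-m_b)u_b>0$, so again $D>0$.

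The only delicate step is this case bookkeeping for strictness. It explains why the extra condition is needed: when $u_a=u_b=0$, a strictly higher RM score gives no advantage. This is consistent with item 2 of Corollary~\ref{cor:quality-gated-uplift}.
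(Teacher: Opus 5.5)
Your proof is correct and is essentially the paper's. Your split $D=(1+m_a)(u_a-u_b)+(m_a-m_b)u_b$ is the paper's three-term decomposition $(u_a-u_b)+m_a(u_a-u_b)+(m_a-m_b)u_b$ with the first two terms merged. The strict case rests on the same two sub-cases ($u_a>u_b$, or $m_a>m_b$ with $u_b>0$), and you spell out more explicitly than the paper why the hypothesis forces one of them.

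Your step ``the strict one must be $m_a>m_b$'' reads ``the inequalities'' as only $u_a\ge u_b$ and $m_a\ge m_b$, not $u_b\ge 0$. The paper's proof tacitly uses the same reading, and it is the only one under which the claim holds, since $u_a=u_b>0$, $m_a=m_b$ gives $\Delta_{\mathrm{TL}}=0$.
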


\begin{proof}
We expand:
\begin{equation}
(1+m_a)u_a-(1+m_b)u_b
=
u_a-u_b+m_a u_a-m_b u_b.
\end{equation}
Rewrite the second difference:
\begin{equation}
m_a u_a-m_b u_b
=
m_a(u_a-u_b)+(m_a-m_b)u_b.
\end{equation}
Therefore,
\begin{equation}
\begin{aligned}
(1+m_a)u_a-(1+m_b)u_b
&=
(u_a-u_b)+m_a(u_a-u_b)+(m_a-m_b)u_b.
\end{aligned}
\end{equation}
Under the assumptions $u_a\ge u_b\ge 0$ and $m_a\ge m_b$, all three terms are nonnegative. Thus the whole expression is nonnegative, and so $\Delta_{\mathrm{TL}}\ge0$. If $u_a>u_b$, then the first term is positive. If $m_a>m_b$ and $u_b>0$, then the third term is positive. Hence the strict case follows.
\end{proof}

\begin{appcorollary}[Resolution when outcome utility is tied]
\label{cor:tied-uplift}
If $u_a=u_b=u>0$, then
\begin{equation}
\Delta_{\mathrm{TL}}
=
0.5\,u\,(m_a-m_b).
\label{eq:app-tied-uplift}
\end{equation}
Thus, among traces with the same positive executor uplift, \method{} prefers the one with higher reasoning-quality score.
\end{appcorollary}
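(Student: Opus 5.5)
The corollary follows by direct substitution into the pairwise difference $\Delta_{\mathrm{TL}}$ of Eq.~\ref{eq:app-delta-tl}, which has already been justified via Proposition~\ref{prop:expected-tracelift-reward} and Lemma~\ref{lem:constant-cancel}. Setting $u_a=u_b=u$ there gives
\[
\Delta_{\mathrm{TL}}=0.5\left[(1+m_a)u-(1+m_b)u\right].
\]
Factoring out $u$, the constant terms $u$ cancel and we are left with $0.5\,u\,(m_a-m_b)$, which is Eq.~\ref{eq:app-tied-uplift}. Equivalently, one can read it off the decomposition used in the proof of Proposition~\ref{prop:aligned-pairwise}: $(u_a-u_b)+m_a(u_a-u_b)+(m_a-m_b)u_b$. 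With $u_a=u_b$ the first two terms vanish, and the third equals $(m_a-m_b)u$.

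The preference claim is a sign argument. Since $u>0$ and $0.5>0$, the sign of $\Delta_{\mathrm{TL}}$ equals the sign of $m_a-m_b$. So $m_a>m_b$ implies $\Delta_{\mathrm{TL}}>0$, meaning the trace with the higher reasoning-quality score receives strictly larger expected trace-specific reward. By Lemma~\ref{lem:constant-cancel}, the problem-level constant $0.5\,\qtr(P,\rempty)$ does not change group-relative ordering, so this expected-reward preference is also a preference in expected GRPO credit. It is worth contrasting this with $\Delta_{\mathrm{exec}}=u_a-u_b=0$: execution-only training cannot separate the two traces, whereas \method{} can.

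There is no real obstacle in this argument. The one point to state carefully is the scope of the claim. It concerns expected rewards, obtained by averaging over executor randomness using Proposition~\ref{prop:uplift-estimator}, and it holds up to the positive affine normalization implied by Lemma~\ref{lem:constant-cancel}. It does not say that any single sampled advantage is ordered this way. I would add a short remark making this explicit.
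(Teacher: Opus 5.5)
Your proof is correct and follows the paper's own argument: substitute $u_a=u_b=u$ into Eq.~\ref{eq:app-delta-tl}, factor out $u$ to get $0.5\,u\,(m_a-m_b)$, and read the preference off the sign, using $u>0$. Your extra remarks are consistent with the paper's framing but not needed for the statement: the contrast with $\Delta_{\mathrm{exec}}=0$, and the caveat that the claim concerns expected rather than sampled rewards.
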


\begin{proof}
Substitute $u_a=u_b=u$:
\begin{equation}
\begin{aligned}
\Delta_{\mathrm{TL}}
&=
0.5\left[(1+m_a)u-(1+m_b)u\right]\\
&=
0.5\,u\,(m_a-m_b).
\end{aligned}
\end{equation}
\end{proof}

\begin{appcorollary}[Separation of useful and harmful traces]
\label{cor:positive-negative-uplift}
If $u_a>0$ and $u_b\le 0$, then
\begin{equation}
\Delta_{\mathrm{TL}}>0
\label{eq:app-positive-negative-claim}
\end{equation}
for all $m_a,m_b\in[0,1]$.
\end{appcorollary}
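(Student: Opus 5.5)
We bound each of the two terms in Eq.~\ref{eq:app-delta-tl} separately, working directly with the expected trace-specific difference
\[
\Delta_{\mathrm{TL}} = 0.5\left[(1+m_a)u_a-(1+m_b)u_b\right],
\]
which was derived from Proposition~\ref{prop:expected-tracelift-reward} and Lemma~\ref{lem:constant-cancel}. No reordering argument like the one in Proposition~\ref{prop:aligned-pairwise} is needed here. The sign conditions on $u_a$ and $u_b$ alone fix the signs of the two products, whatever the RM scores are.

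First, since $m_a\in[0,1]$, the factor $1+m_a$ lies in $[1,2]$ and is in particular at least $1>0$. Together with $u_a>0$, this gives $(1+m_a)u_a\ge u_a>0$.

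Second, since $m_b\in[0,1]$, the factor $1+m_b$ is also positive. Together with $u_b\le 0$, this gives $(1+m_b)u_b\le 0$, so $-(1+m_b)u_b\ge 0$.

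Adding the two bounds gives
\[
(1+m_a)u_a-(1+m_b)u_b\ \ge\ u_a\ >\ 0.
\]
Multiplying by $0.5$ then gives $\Delta_{\mathrm{TL}}\ge 0.5\,u_a>0$. This holds uniformly over all $m_a,m_b\in[0,1]$, and it even gives an explicit margin of $0.5\,u_a$.

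There is no real obstacle in this argument. The only point needing care is that the conclusion relies on the RM score being nonnegative, which is guaranteed by Assumption~\ref{ass:fixed-executor}. Because of this, the gating factor $1+\mrm$ can never flip the sign of the uplift. Briefly noting this sign-preservation is the key observation to record.
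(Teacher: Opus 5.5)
Your proof is correct and follows the paper's argument: from $m_a,m_b\in[0,1]$ and the sign conditions you get $(1+m_a)u_a>0$ and $(1+m_b)u_b\le 0$, and the conclusion follows by subtracting. The explicit margin $\Delta_{\mathrm{TL}}\ge 0.5\,u_a$ is a small addition that the paper does not state.
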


\begin{proof}
Since $m_a\in[0,1]$, $(1+m_a)u_a>0$. Since $m_b\in[0,1]$ and $u_b\le0$, $(1+m_b)u_b\le0$. Hence
\begin{equation}
(1+m_a)u_a-(1+m_b)u_b>0,
\end{equation}
and therefore $\Delta_{\mathrm{TL}}>0$.
\end{proof}

These pairwise results show what the product term contributes. It does not replace executor utility. Instead, it refines executor utility by preferring traces that are both useful and reason-quality aligned, while still separating positive-uplift traces from harmful traces.

\subsection{Why the Ablated Rewards Are Weaker}
\label{app:ablation-theory}

The main ablations can be understood by comparing the population reward decompositions.

\subsubsection{No-uplift reward}

The No-uplift ablation uses
\begin{equation}
\mathcal{R}_{\mathrm{no\text{-}uplift}}(P,R)
=
0.5\,X(P,R)+0.5\,\mrm(P,R).
\label{eq:app-no-uplift-reward}
\end{equation}
Taking expectation,
\begin{equation}
\Expect[\mathcal{R}_{\mathrm{no\text{-}uplift}}(P,R)\mid P,R]
=
0.5\,\qtr(P,\rempty)
+
0.5\left(\utr(P,R)+\mrm(P,R)\right).
\label{eq:app-no-uplift-expected}
\end{equation}
After removing the problem-level constant, the trace-specific score is
\begin{equation}
S_{\mathrm{no\text{-}uplift}}(P,R)
=
\utr(P,R)+\mrm(P,R).
\label{eq:app-no-uplift-score}
\end{equation}

\begin{appproposition}[No-uplift can reward executor-useless traces]
\label{prop:no-uplift-failure}
For two traces $R_a,R_b$ for the same problem, the No-uplift score prefers $R_a$ over $R_b$ whenever
\begin{equation}
m_a-m_b>u_b-u_a.
\label{eq:app-no-uplift-condition}
\end{equation}
Therefore, a trace with lower executor uplift can be preferred solely because it has a higher RM score.
\end{appproposition}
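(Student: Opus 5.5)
The plan is to work directly from the trace-specific score $S_{\mathrm{no\text{-}uplift}}$ in Eq.~\ref{eq:app-no-uplift-score}. First, justify that this score is the right object for comparing $R_a$ and $R_b$. By Eq.~\ref{eq:app-no-uplift-expected}, the expected No-uplift reward equals $0.5\,\qtr(P,\rempty)$ plus $0.5\,S_{\mathrm{no\text{-}uplift}}(P,R)$. The first piece is a problem-level constant, so Lemma~\ref{lem:constant-cancel} shows it drops out of the group-relative advantage. The factor $0.5$ is positive and therefore preserves order. So ``the No-uplift score prefers $R_a$ over $R_b$'' reduces to the inequality $S_{\mathrm{no\text{-}uplift}}(P,R_a)>S_{\mathrm{no\text{-}uplift}}(P,R_b)$.

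Next, substitute the pairwise notation of Eqs.~\ref{eq:app-pairwise-u}--\ref{eq:app-pairwise-m}. The required inequality becomes $u_a+m_a>u_b+m_b$. Rearranging gives exactly $m_a-m_b>u_b-u_a$, which is Eq.~\ref{eq:app-no-uplift-condition}. This is an equivalence, so the condition is both sufficient and necessary for the preference.

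For the second sentence of the statement, exhibit a concrete configuration showing the condition can hold with $u_a<u_b$. We need $0<u_b-u_a<m_a-m_b$, and this is feasible because $m_a-m_b$ can be as large as $1$ for $m_a,m_b\in[0,1]$. One explicit choice is $u_a=0$, $u_b=0.2$, $m_a=0.9$, $m_b=0.1$. These give $m_a-m_b=0.8>0.2=u_b-u_a$. Thus the executor-useless trace $R_a$ is preferred over the executor-useful $R_b$ purely because of its higher RM score.

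It is worth contrasting this with Corollary~\ref{cor:quality-gated-uplift}. Under the \method{} reward, $u_a=0$ would give $R_a$ no RM-driven credit, and Corollary~\ref{cor:positive-negative-uplift} would force the preference toward $R_b$.

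There is no real obstacle here. The only point needing care is the first step: one must state that the preference refers to the expected, constant-removed score, so that invoking Lemma~\ref{lem:constant-cancel} is legitimate.
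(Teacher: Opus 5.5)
Your proposal is correct and follows the paper's own argument: reduce the preference to $u_a+m_a>u_b+m_b$, rearrange to $m_a-m_b>u_b-u_a$, and observe that this is compatible with $u_a<u_b$. Your added justification that the constant-removed expected score is the right object, and your explicit numerical instance, make the argument slightly more careful than the paper's one-line remark without changing the route.
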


\begin{proof}
No-uplift prefers $R_a$ over $R_b$ when
\begin{equation}
u_a+m_a>u_b+m_b.
\end{equation}
Rearranging gives
\begin{equation}
m_a-m_b>u_b-u_a.
\end{equation}
This condition can hold even when $u_a<u_b$, so the reward can prefer a lower-utility trace if its RM score is sufficiently higher.
\end{proof}

This formalizes the observed weakness of removing uplift. The RM score is no longer grounded in whether the frozen executor actually benefits from the trace. A fluent or rubric-plausible trace can receive credit even when its executor treatment effect is zero or negative.

\subsubsection{RM-uplift-only reward}

The RM-uplift-only ablation uses
\begin{equation}
\mathcal{R}_{\mathrm{RM\text{-}uplift}}(P,R)
=
\mrm(P,R)\,\utr(P,R).
\label{eq:app-rm-uplift-only-reward}
\end{equation}
This preserves executor grounding, but removes the direct verifier anchor. The trace-specific score is
\begin{equation}
S_{\mathrm{RM\text{-}uplift}}(P,R)
=
\mrm(P,R)\,\utr(P,R).
\label{eq:app-rm-uplift-only-score}
\end{equation}

\begin{appproposition}[Verifier-anchor robustness of the full reward]
\label{prop:verifier-anchor}
For the full \method{} reward, the derivative of the expected trace-specific score with respect to executor uplift is
\begin{equation}
\frac{\partial}{\partial \utr}
\left[
0.5(1+\mrm)\utr
\right]
=
0.5(1+\mrm)
\in[0.5,1].
\label{eq:app-full-uplift-derivative}
\end{equation}
For RM-uplift-only, the corresponding derivative is
\begin{equation}
\frac{\partial}{\partial \utr}
\left[
\mrm\utr
\right]
=
\mrm
\in[0,1].
\label{eq:app-rm-uplift-derivative}
\end{equation}
Thus, the full reward always retains at least half-strength direct sensitivity to executor utility, even when the RM score is underestimated, whereas RM-uplift-only can suppress the utility signal entirely when $\mrm$ is near zero.
\end{appproposition}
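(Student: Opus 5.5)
The plan is to treat both expected trace-specific scores as affine functions of the single variable $\utr$, holding $P$, $R$, and the RM score $\mrm(P,R)$ fixed. Under Assumption~\ref{ass:fixed-executor}, the Reason RM is frozen during each planner update and receives only $(P,R)$. So $\mrm$ is a constant coefficient with respect to executor utility, and partial differentiation in $\utr$ is well defined. For the full reward, I start from Proposition~\ref{prop:expected-tracelift-reward}, which gives
\[
\Expect[\mathcal{R}(P,R)\mid P,R]=0.5\,\qtr(P,\rempty)+0.5\,(1+\mrm(P,R))\,\utr(P,R).
\]
Lemma~\ref{lem:constant-cancel} then lets me discard the problem-level constant $0.5\,\qtr(P,\rempty)$, leaving the trace-specific score $0.5(1+\mrm)\utr$.

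Differentiating this linear function of $\utr$ gives the coefficient $0.5(1+\mrm)$. The range claim follows from $\mrm\in[0,1]$ (item 4 of Assumption~\ref{ass:fixed-executor}), since the map $m\mapsto 0.5(1+m)$ is increasing and sends $[0,1]$ onto $[0.5,1]$.

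For the RM-uplift-only ablation, I take the trace-specific score $S_{\mathrm{RM\text{-}uplift}}=\mrm\,\utr$ from Eq.~\ref{eq:app-rm-uplift-only-score}. I will note that, by Proposition~\ref{prop:uplift-estimator}, its expectation with the estimator $\widehat u_K$ in place of $\utr$ equals $\mrm\utr$ exactly, because $\mrm$ is constant given $(P,R)$. Differentiating gives $\mrm\in[0,1]$.

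The concluding sentence is a comparison of the two coefficients. The full-reward coefficient is bounded below by $1/2$ uniformly in $\mrm$, whereas $\mrm\to 0$ drives the ablated coefficient to $0$, so the utility signal vanishes there.

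The only step needing care is interpreting the derivative. $\utr$ is determined by $(P,R)$ rather than being a free variable, so I will state explicitly that the derivative is taken with respect to the formal argument of the affine map $u\mapsto 0.5(1+\mrm)u$ (respectively $u\mapsto \mrm u$), with $\mrm$ held fixed. This is exactly how Corollary~\ref{cor:quality-gated-uplift} treats its derivative in $\mrm$. With that convention stated, the rest is immediate.
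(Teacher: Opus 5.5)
Your proposal is correct and matches the paper's proof: the paper also reads both derivatives directly off the linear scores $0.5(1+\mrm)\utr$ and $\mrm\utr$, uses $\mrm\in[0,1]$ to get the ranges $[0.5,1]$ and $[0,1]$, and notes that at $\mrm=0$ the ablated derivative vanishes while the full one equals $0.5$. Your additions do not change the route; they are sound clarifications: recovering the trace-specific score from Proposition~\ref{prop:expected-tracelift-reward} and Lemma~\ref{lem:constant-cancel}, and stating that the derivative is taken in the formal argument with $\mrm$ held fixed.
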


\begin{proof}
Both derivatives follow directly from the displayed scores. Since $\mrm\in[0,1]$,
\begin{equation}
0.5(1+\mrm)\in[0.5,1],
\end{equation}
while
\begin{equation}
\mrm\in[0,1].
\end{equation}
If $\mrm=0$, the RM-uplift-only derivative is zero, whereas the full reward derivative is $0.5$.
\end{proof}

This explains why the verifier term is not redundant. The RM-uplift product is useful as a reasoning-quality filter, but the direct verifier term ensures that training remains anchored to final task success even when the RM score is imperfect.

\subsubsection{LLM-as-judge replacement}

A direct LLM-as-judge replacement uses the same form as \method{} but substitutes a prompted judge score $s_{\mathrm{judge}}(P,R)$ for the trained RM score. The ablation is weak when the judge score saturates, because a nearly constant score cannot supply within-group quality information.

\begin{applemma}[Effect of a constant judge score]
\label{lem:constant-judge}
Suppose that, for all traces in a GRPO group for the same problem $P$, the judge score is constant:
\begin{equation}
s_{\mathrm{judge}}(P,R_i)=c
\qquad\text{for all }i,
\label{eq:app-constant-judge-condition}
\end{equation}
where $c\in[0,1]$. Then the expected judge-based reward is a positive affine transformation of the execution success probability $\qtr(P,R_i)$ within that group:
\begin{equation}
\Expect[\mathcal{R}_{\mathrm{judge}}(P,R_i)\mid P,R_i]
=
0.5(1+c)\,\qtr(P,R_i)-0.5c\,\qtr(P,\rempty).
\label{eq:app-constant-judge-reward}
\end{equation}
Consequently, the judge score contributes no additional trace-quality ranking inside the group.
\end{applemma}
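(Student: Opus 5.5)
The proof follows the template of Proposition~\ref{prop:expected-tracelift-reward}, with the trained RM score replaced by the constant $c$. First we write the judge-based reward as
\[
\mathcal{R}_{\mathrm{judge}}(P,R_i)=0.5\,X(P,R_i)+0.5\,s_{\mathrm{judge}}(P,R_i)\,\widehat{u}_K(P,R_i),
\]
and substitute $s_{\mathrm{judge}}(P,R_i)=c$ for every $i$. The judge score is a fixed number given $(P,R_i)$, and here it is the same constant across the group. So conditional expectation only has to act on $X$ and $\widehat{u}_K$.

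By Eq.~\ref{eq:app-single-exec-expectation}, $\Expect[X(P,R_i)\mid P,R_i]=\qtr(P,R_i)$. By Proposition~\ref{prop:uplift-estimator}, $\Expect[\widehat{u}_K(P,R_i)\mid P,R_i]=\utr(P,R_i)=\qtr(P,R_i)-\qtr(P,\rempty)$. The clipping in Eq.~\ref{eq:uplift} is inactive by the remark on clipping, so it does not affect this step. Combining the two expectations gives
\[
0.5\,\qtr(P,R_i)+0.5\,c\,\bigl(\qtr(P,R_i)-\qtr(P,\rempty)\bigr),
\]
which rearranges to Eq.~\ref{eq:app-constant-judge-reward}.

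For the consequence, we note two facts about Eq.~\ref{eq:app-constant-judge-reward}. The slope $0.5(1+c)$ is at least $0.5$, so it is strictly positive. The intercept $-0.5c\,\qtr(P,\rempty)$ depends only on $P$. Within the group the expected reward is therefore a strictly increasing affine function of $\qtr(P,R_i)$, and it induces the same ordering of traces as execution-only training.

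To connect this to GRPO credit, we apply Lemma~\ref{lem:constant-cancel} to remove the problem-level intercept. The remaining positive scale factor cancels in the group-normalized advantage as well, because it scales both numerator and standard deviation (up to the stabilizer $\delta$). Hence the judge contributes no ranking information beyond $\qtr$.

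No step is a serious obstacle; the only care needed is to make precise what "no additional trace-quality ranking" means. I would state it as the claim that the induced within-group preference order coincides with that of $\qtr(P,R_i)$, and avoid asserting exact equality of normalized advantages when $\delta>0$.
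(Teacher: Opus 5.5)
Your proposal is correct and follows the paper's proof. You substitute $c$ for the judge score, use $\Expect[X\mid P,R_i]=\qtr(P,R_i)$ together with unbiasedness of the uplift estimator to obtain $0.5\,\qtr(P,R_i)+0.5\,c\,(\qtr(P,R_i)-\qtr(P,\rempty))$, and then read off the positive slope and the problem-level intercept to conclude that the within-group ordering coincides with that of $\qtr$; your added remarks on inactive clipping and on Lemma~\ref{lem:constant-cancel} (with the $\delta>0$ caveat) go slightly beyond the paper's proof, which defers the GRPO-normalization step to Corollary~\ref{cor:saturated-judge}, but they are sound.
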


\begin{proof}
The judge-based reward expectation is
\begin{equation}
0.5\,\qtr(P,R_i)
+
0.5\,c\,\big(\qtr(P,R_i)-\qtr(P,\rempty)\big).
\end{equation}
Expanding,
\begin{equation}
\begin{aligned}
0.5\,\qtr(P,R_i)
+
0.5c\,\qtr(P,R_i)
-
0.5c\,\qtr(P,\rempty)
&=
0.5(1+c)\,\qtr(P,R_i)-0.5c\,\qtr(P,\rempty).
\end{aligned}
\end{equation}
For fixed $P$, the second term is constant across $i$, and $0.5(1+c)>0$. Therefore the reward ranks traces exactly as $\qtr(P,R_i)$ ranks them and adds no independent quality signal.
\end{proof}

\begin{appcorollary}[Saturated judges reduce to execution-only ranking]
\label{cor:saturated-judge}
If a prompted judge returns nearly the same score for most on-policy traces, then the judge-based reward approaches a positively scaled execution-only reward under GRPO group normalization. It cannot reliably distinguish fluent but flawed traces from genuinely executor-useful reasoning.
\end{appcorollary}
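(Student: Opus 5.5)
The proof reduces the ``nearly constant'' judge to the exactly constant case of Lemma~\ref{lem:constant-judge} and then shows the resulting error is controlled by how far the judge scores deviate from a constant. The two ingredients are Lemma~\ref{lem:constant-judge} and Lemma~\ref{lem:constant-cancel}.

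\textbf{Step 1: formalize ``nearly the same score''.} Fix a problem $P$ and a GRPO group $R_1,\dots,R_G$. Write $s_i=s_{\mathrm{judge}}(P,R_i)=c+\varepsilon_i$ with $|\varepsilon_i|\le\varepsilon$. One can take $c$ to be the saturated value; in the logged diagnostic $c=1$ and $\varepsilon$ is small for $95.5\%$ of samples.

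\textbf{Step 2: decompose the expected reward.} As in the proof of Proposition~\ref{prop:expected-tracelift-reward},
\[
\Expect[\mathcal{R}_{\mathrm{judge}}(P,R_i)\mid P,R_i]=0.5(1+c)\qtr(P,R_i)-0.5c\,\qtr(P,\rempty)+0.5\,\varepsilon_i\,\utr(P,R_i).
\]
The first two terms are exactly the constant-judge expression of Lemma~\ref{lem:constant-judge}. The last term satisfies $|0.5\,\varepsilon_i\utr|\le 0.5\varepsilon$ because $\utr\in[-1,1]$.

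\textbf{Step 3: pass to GRPO advantages.} By Lemma~\ref{lem:constant-cancel}, the term $-0.5c\,\qtr(P,\rempty)$ cancels. The positive factor $0.5(1+c)$ cancels as well, up to the $\delta$ in the denominator, because group normalization is invariant to positive rescaling. So the advantage computed from the judge-based rewards equals the advantage computed from $\qtr(P,R_i)$, i.e.\ the execution-only reward, perturbed by terms of size $O(\varepsilon)$.

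\textbf{Step 4: make the perturbation bound explicit.} Normalized advantages are Lipschitz in the reward vector only when the group standard deviation is bounded away from zero. The bound I would state is
\[
|\hat A_i(r)-\hat A_i(r')|\le \frac{C\,\varepsilon}{\mathrm{std}_j(r'_j)+\delta},
\]
obtained by expanding the numerator and denominator separately. This gives convergence to the execution-only advantage as $\varepsilon\to0$. That convergence is the precise meaning of ``approaches''.

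\textbf{Step 5: the qualitative claim.} If two traces have equal $\qtr$ but different true reasoning quality, the judge-based advantage difference between them is $O(\varepsilon)$. Hence the judge cannot reliably separate fluent-but-flawed traces from executor-useful ones.

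\textbf{Main obstacle.} The delicate step is the continuity estimate in Step~4. When the execution-only rewards are nearly tied, the standard deviation is small and the $O(\varepsilon)$ perturbation can dominate. I would handle this by keeping $\delta>0$ in the bound and restricting the corollary to groups whose execution-only spread is not vanishing. I would also note that in the degenerate tie case the judge's ranking is driven by the small residuals $\varepsilon_i$, which are noise relative to the rubric.
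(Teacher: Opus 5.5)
Your proposal is correct and follows the paper's own route. The paper reduces to the exactly constant case via Lemma~\ref{lem:constant-judge}, cancels the problem-level term with Lemma~\ref{lem:constant-cancel}, and observes that the positive factor $0.5(1+c)$ only rescales advantages by a group-wide factor; it then simply asserts that the nearly constant case ``holds approximately.'' Your explicit $0.5\,\varepsilon_i\,\utr$ residual and the $C\varepsilon/(\mathrm{std}_j(r'_j)+\delta)$ perturbation bound make that step precise, and your caveat is a genuine limitation the paper's proof omits: when execution rewards are nearly tied, group normalization amplifies the residuals.
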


\begin{proof}
If $s_{\mathrm{judge}}(P,R_i)=c$ exactly, Lemma~\ref{lem:constant-judge} gives a positive affine transformation of $\qtr(P,R_i)$ within the group. Adding a group-level constant has no effect by Lemma~\ref{lem:constant-cancel}. Positive scaling changes the magnitude of normalized advantages by a group-wide factor but does not introduce any new ranking signal. If the judge scores are nearly constant, the same conclusion holds approximately.
\end{proof}

This analysis explains why a trained Reason RM can be more useful than direct rubric prompting. The RM is trained on same-problem reference-versus-perturbation groups, which makes its score discriminative in precisely the local comparisons used by GRPO. A saturated judge score collapses this extra signal.

\subsection{Finite-Sample Reward Noise}
\label{app:finite-sample-noise}

The reward used during GRPO is estimated from finite executor samples. The uplift estimator is unbiased, but its variance affects how reliably the policy can identify better traces inside a group.

Assume that the single execution score $X(P,R)$ and the $K$ samples used for $\widehat u_K(P,R)$ are independent. This independence assumption is made only for the variance calculation below; the expectation analysis above does not require it.

\begin{appproposition}[Variance bound for the sampled \method{} reward]
\label{prop:reward-variance}
Let
\begin{equation}
\widehat{\mathcal{R}}(P,R)
=
0.5\,X(P,R)+0.5\,\mrm(P,R)\,\widehat u_K(P,R).
\label{eq:app-sampled-reward}
\end{equation}
Under the independence assumptions of Proposition~\ref{prop:uplift-estimator},
\begin{equation}
\Var[\widehat{\mathcal{R}}(P,R)\mid P,R]
\le
\frac{1}{16}
+
\frac{\mrm(P,R)^2}{8K}
\le
\frac{1}{16}+\frac{1}{8K}.
\label{eq:app-sampled-reward-variance}
\end{equation}
\end{appproposition}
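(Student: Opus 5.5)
Conditional on $(P,R)$, the quantity $\mrm(P,R)$ is a fixed number in $[0,1]$, so $\widehat{\mathcal{R}}(P,R)$ is a deterministic linear combination of two random quantities: the single execution score $X(P,R)$ and the uplift estimator $\widehat u_K(P,R)$. The plan is to use the independence assumption stated just before the proposition to split the variance into two pieces. Each piece is then bounded separately: the second by Proposition~\ref{prop:uplift-estimator}, and the first by the elementary Bernoulli bound $p(1-p)\le 1/4$.

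\textbf{Step 1: split the variance.} Since $X(P,R)$ is independent of the $2K$ samples defining $\widehat u_K(P,R)$, the covariance term vanishes:
\begin{equation}
\Var[\widehat{\mathcal{R}}\mid P,R]
=
\tfrac14\Var[X(P,R)\mid P,R]
+
\tfrac14\,\mrm(P,R)^2\,\Var[\widehat u_K(P,R)\mid P,R].
\end{equation}

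\textbf{Step 2: bound the execution term.} By Eq.~\ref{eq:app-single-exec-expectation}, $X(P,R)$ is Bernoulli with mean $\qtr(P,R)$. Its variance is therefore $\qtr(1-\qtr)\le 1/4$, so the first term is at most $1/16$.

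\textbf{Step 3: bound the uplift term.} Proposition~\ref{prop:uplift-estimator} gives $\Var[\widehat u_K]\le 1/(2K)$. The second term is therefore at most $\tfrac14\mrm^2\cdot\tfrac{1}{2K}=\mrm^2/(8K)$. This is the first inequality in Eq.~\ref{eq:app-sampled-reward-variance}. Finally, $\mrm\in[0,1]$ by Assumption~\ref{ass:fixed-executor}, so $\mrm^2\le 1$, which yields the second inequality.

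\textbf{Main obstacle.} The only delicate point is justifying the zero covariance. The independence assumptions of Proposition~\ref{prop:uplift-estimator} cover only the two uplift branches, not $X$. The proof must therefore explicitly invoke the separately stated assumption that $X(P,R)$ is independent of all $K$ uplift samples.

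Without that assumption, the bound fails in general. For example, if $X$ were reused as one of the $Y^{R}_k$, a positive covariance of order $\mrm/(4K)$ would appear. So the write-up should state clearly that the independence assumption is what licenses Step 1; the remaining steps are routine.
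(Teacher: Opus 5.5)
Your proposal is correct and follows the paper's proof essentially step for step: independence of $X(P,R)$ from the uplift samples removes the cross term, the Bernoulli bound gives $1/16$, Proposition~\ref{prop:uplift-estimator} gives $\mrm^2/(8K)$, and $\mrm\le 1$ yields the final bound. You are also right that the needed independence is the separately stated assumption just before the proposition, not part of Proposition~\ref{prop:uplift-estimator}; the paper invokes exactly that assumption.
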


\begin{proof}
Using independence between $X(P,R)$ and $\widehat u_K(P,R)$,
\begin{equation}
\Var[\widehat{\mathcal{R}}(P,R)\mid P,R]
=
0.25\,\Var[X(P,R)]
+
0.25\,\mrm(P,R)^2\,\Var[\widehat u_K(P,R)].
\end{equation}
Since $X(P,R)$ is Bernoulli with mean $\qtr(P,R)$,
\begin{equation}
\Var[X(P,R)]
=
\qtr(P,R)(1-\qtr(P,R))
\le
\frac{1}{4}.
\end{equation}
By Proposition~\ref{prop:uplift-estimator},
\begin{equation}
\Var[\widehat u_K(P,R)]\le\frac{1}{2K}.
\end{equation}
Therefore,
\begin{equation}
\begin{aligned}
\Var[\widehat{\mathcal{R}}(P,R)\mid P,R]
&\le
0.25\cdot\frac{1}{4}
+
0.25\,\mrm(P,R)^2\cdot\frac{1}{2K}\\
&=
\frac{1}{16}
+
\frac{\mrm(P,R)^2}{8K}.
\end{aligned}
\end{equation}
Because $\mrm(P,R)\in[0,1]$,
\begin{equation}
\frac{1}{16}
+
\frac{\mrm(P,R)^2}{8K}
\le
\frac{1}{16}+\frac{1}{8K}.
\end{equation}
\end{proof}

The bound shows the role of the rollout count $K$: increasing $K$ reduces the variance of the uplift component, but it does not remove the variance of the single verifier outcome. Therefore, beyond a moderate $K$, the marginal benefit of additional executor comparisons can be limited, especially when training optimizes a stochastic temperature-$0.5$ executor utility while evaluation measures deterministic temperature-$0$ pass@1.

\begin{appcorollary}[Finite-sample sign reliability]
\label{cor:sign-reliability}
Consider two traces $R_a,R_b$ for the same problem with population reward gap
\begin{equation}
\Delta
=
\Expect[\widehat{\mathcal{R}}(P,R_a)-\widehat{\mathcal{R}}(P,R_b)\mid P,R_a,R_b]
>
0.
\label{eq:app-population-gap}
\end{equation}
Let the empirical gap be $\widehat{\Delta}$. If the two reward estimates are independent, then
\begin{equation}
\Prob[\widehat{\Delta}\le0]
\le
\frac{
\Var[\widehat{\mathcal{R}}(P,R_a)]
+
\Var[\widehat{\mathcal{R}}(P,R_b)]
}{\Delta^2}.
\label{eq:app-sign-reliability}
\end{equation}
\end{appcorollary}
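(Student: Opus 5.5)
The plan is a one-sided Chebyshev argument applied to the empirical gap. Write $\widehat{\Delta}=\widehat{\mathcal{R}}(P,R_a)-\widehat{\mathcal{R}}(P,R_b)$, with everything conditioned on $P,R_a,R_b$. The mean of $\widehat{\Delta}$ is $\Delta>0$ by definition, as in Eq.~\ref{eq:app-population-gap}. If the two reward estimates are independent, the covariance term drops out and
\begin{equation}
\Var[\widehat{\Delta}]=\Var[\widehat{\mathcal{R}}(P,R_a)]+\Var[\widehat{\mathcal{R}}(P,R_b)].
\end{equation}

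The key step is an event inclusion. If $\widehat{\Delta}\le 0$, then $\widehat{\Delta}-\Delta\le-\Delta$, and hence $|\widehat{\Delta}-\Delta|\ge\Delta$. So
\begin{equation}
\Prob[\widehat{\Delta}\le 0]\le\Prob\big[|\widehat{\Delta}-\Expect\widehat{\Delta}|\ge\Delta\big].
\end{equation}
Chebyshev's inequality bounds the right-hand side by $\Var[\widehat{\Delta}]/\Delta^2$. Substituting the variance decomposition above gives Eq.~\ref{eq:app-sign-reliability} directly. One could sharpen this with Cantelli's inequality to $\Var/(\Var+\Delta^2)$, but that is not needed for the stated bound.

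The only point needing care is that every variance and expectation is taken conditionally on the fixed problem and traces. This makes $\mrm$ deterministic and leaves only executor randomness. Finite second moments are automatic, since the rewards are bounded. As an optional remark, combining with Proposition~\ref{prop:reward-variance} bounds the right-hand side by $2(1/16+1/(8K))/\Delta^2$, which makes the $K$-dependence explicit.

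I do not expect a real obstacle here. The only thing to check is that independence is used solely to kill the cross-covariance term. Without independence, the bound would instead carry $\Var[\widehat{\Delta}]$, which includes $-2\,\mathrm{Cov}$.
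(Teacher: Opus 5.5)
Your proposal is correct and matches the paper's proof: both use the event inclusion $\{\widehat{\Delta}\le 0\}\subseteq\{|\widehat{\Delta}-\Delta|\ge\Delta\}$, then Chebyshev's inequality, with independence used only to split $\Var[\widehat{\Delta}]$ into the sum of the two reward variances. Your Cantelli refinement and the explicit $K$-dependent bound via Proposition~\ref{prop:reward-variance} are valid optional extras that the paper does not include.
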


\begin{proof}
Let
\begin{equation}
\varepsilon=\widehat{\Delta}-\Delta.
\end{equation}
Then $\Expect[\varepsilon]=0$. If $\widehat{\Delta}\le0$, then
\begin{equation}
\varepsilon\le-\Delta,
\end{equation}
which implies
\begin{equation}
|\varepsilon|\ge\Delta.
\end{equation}
Therefore, by Chebyshev's inequality,
\begin{equation}
\Prob[\widehat{\Delta}\le0]
\le
\Prob[|\varepsilon|\ge\Delta]
\le
\frac{\Var[\varepsilon]}{\Delta^2}.
\end{equation}
Under independence of the two reward estimates,
\begin{equation}
\Var[\varepsilon]
=
\Var[\widehat{\mathcal{R}}(P,R_a)]
+
\Var[\widehat{\mathcal{R}}(P,R_b)].
\end{equation}
Substituting gives the result.
\end{proof}

This corollary is not a convergence theorem for GRPO. It only states that larger reward margins and lower reward variance make pairwise credit assignment more reliable. The product term helps by increasing the margin between traces whose executor utility is supported by high reasoning quality and traces whose utility is weak, noisy, or unsupported.

\subsection{Robustness to RM Score Error}
\label{app:rm-error}

The theory above does not require the RM to be perfect. Since the RM score is bounded and multiplied by uplift, the effect of RM score error is also bounded.

Let $\mrm(P,R)$ be an ideal bounded score used for analysis and let $\widetilde m(P,R)$ be the score actually used in the reward. Suppose
\begin{equation}
|\widetilde m(P,R)-\mrm(P,R)|\le\eta
\label{eq:app-rm-error-assumption}
\end{equation}
for a given trace.

\begin{appproposition}[Bounded effect of RM score error]
\label{prop:rm-error-bound}
For any fixed $P,R$,
\begin{equation}
\left|
\Expect[\widetilde{\mathcal{R}}(P,R)-\mathcal{R}(P,R)\mid P,R]
\right|
\le
\frac{\eta}{2},
\label{eq:app-rm-error-bound}
\end{equation}
where $\widetilde{\mathcal{R}}$ denotes the reward computed with $\widetilde m$ and $\mathcal{R}$ denotes the reward computed with $\mrm$.
\end{appproposition}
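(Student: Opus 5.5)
We write both rewards with the same sampled quantities and subtract them. By Eq.~\ref{eq:reward}, in the sampled form used in Proposition~\ref{prop:expected-tracelift-reward}, we have
\[
\widetilde{\mathcal{R}}(P,R)=0.5\,X(P,R)+0.5\,\widetilde m(P,R)\,\widehat u_K(P,R),
\qquad
\mathcal{R}(P,R)=0.5\,X(P,R)+0.5\,\mrm(P,R)\,\widehat u_K(P,R),
\]
where $X(P,R)$ and $\widehat u_K(P,R)$ are the same random variables in both, since the RM score does not enter executor sampling. The verifier term then cancels exactly. This leaves
\[
\widetilde{\mathcal{R}}(P,R)-\mathcal{R}(P,R)=0.5\,\big(\widetilde m(P,R)-\mrm(P,R)\big)\,\widehat u_K(P,R).
\]

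Next we take the conditional expectation given $P,R$. Both RM scores are deterministic functions of $(P,R)$, fixed during the update by Assumption~\ref{ass:fixed-executor}, so they factor out. By Proposition~\ref{prop:uplift-estimator}, $\Expect[\widehat u_K(P,R)\mid P,R]=\utr(P,R)$. Hence
\[
\Expect[\widetilde{\mathcal{R}}(P,R)-\mathcal{R}(P,R)\mid P,R]=0.5\,(\widetilde m-\mrm)\,\utr(P,R).
\]

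We then bound the absolute value. The error assumption Eq.~\ref{eq:app-rm-error-assumption} gives $|\widetilde m-\mrm|\le\eta$. Since $\utr(P,R)$ is a difference of two probabilities, $|\utr(P,R)|\le 1$. Multiplying these bounds gives $\eta/2$. The argument also yields the sharper bound $\tfrac{\eta}{2}|\utr(P,R)|$, which is worth noting.

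The only delicate point is justifying that the same realizations of $X$ and $\widehat u_K$ appear in both rewards. If one instead compares two independent runs, the difference of expectations is still the same by linearity, so the conclusion is unaffected. No independence assumption between $X$ and $\widehat u_K$ is needed, because only linearity of expectation is used.
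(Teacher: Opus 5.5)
Your proposal is correct and follows essentially the same route as the paper: the verifier term cancels, the expected difference reduces to $0.5\,(\widetilde m-\mrm)\,\utr(P,R)$ via unbiasedness of the uplift estimator, and the bound follows from $|\widetilde m-\mrm|\le\eta$ and $|\utr|\le 1$. Your added remarks, that the same realizations (or linearity alone) suffice and that the sharper bound $\tfrac{\eta}{2}|\utr(P,R)|$ holds, are valid refinements of the same argument.
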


\begin{proof}
The only changed term is the RM-uplift term:
\begin{equation}
\Expect[\widetilde{\mathcal{R}}-\mathcal{R}\mid P,R]
=
0.5\,(\widetilde m(P,R)-\mrm(P,R))\,\utr(P,R).
\end{equation}
Taking absolute values,
\begin{equation}
\left|
\Expect[\widetilde{\mathcal{R}}-\mathcal{R}\mid P,R]
\right|
=
0.5\,|\widetilde m(P,R)-\mrm(P,R)|\,|\utr(P,R)|.
\end{equation}
Since $|\widetilde m-\mrm|\le\eta$ and $|\utr(P,R)|\le1$,
\begin{equation}
\left|
\Expect[\widetilde{\mathcal{R}}-\mathcal{R}\mid P,R]
\right|
\le
0.5\,\eta.
\end{equation}
\end{proof}

For pairwise comparisons, if two traces each have RM score error at most $\eta$, then the error in the expected reward gap is at most $\eta$, because each trace contributes at most $\eta/2$. Therefore, a population reward margin larger than $\eta$ is stable to such bounded RM score perturbations. This motivates training a calibrated Reason RM: the smaller and less saturated the score error, the more reliably the reward preserves the intended within-group ordering.

\subsection{Scope of the Analysis}
\label{app:theory-scope}

The analysis establishes properties of the training reward under a fixed executor and matched training-time sampling. It does not prove that every improvement in the training-time objective must transfer to deterministic evaluation. Such a statement would require additional assumptions relating $\qtr(P,R)$ at temperature $0.5$ to $\qev(P,R)$ at temperature $0$, as well as assumptions about policy optimization and generalization. The main experiments provide this empirical link: the Reason RM and uplift estimator are removed at evaluation time, and improvements are measured only through the fixed planner-executor chain.

Within its stated scope, the analysis gives the following conclusions:
\begin{enumerate}
    \item Executor uplift is an unbiased estimate of the conditional effect of supplying trace $R$ to the frozen executor.
    \item The expected \method{} reward reduces, up to a problem-level constant, to a quality-weighted executor-uplift objective.
    \item The product term prevents high RM scores from creating trace-specific reward when the trace has zero executor utility.
    \item The verifier term provides a direct task-success anchor and prevents the reward from depending only on RM-weighted relative uplift.
    \item A saturated LLM-as-judge score collapses to an execution-only ranking and therefore cannot substitute for a calibrated Reason RM.
\end{enumerate}

\clearpage
\bibliographystyle{plainnat}
\bibliography{main}

\end{document}